\documentclass[a4paper,fleqn]{cas-sc}

\usepackage[numbers,sort&compress]{natbib}
\usepackage{amsthm,amsmath,amssymb}
\usepackage{mathtools,mathrsfs}
\usepackage{enumitem}
\usepackage{placeins}
\usepackage{tabularx}
\usepackage{algorithm}
\usepackage{algpseudocode}
\usepackage{xcolor}
\usepackage{tikz}
\usetikzlibrary{arrows.meta,positioning}

\theoremstyle{plain}
\newtheorem{theorem}{Theorem}[section]

\theoremstyle{definition}
\newtheorem{assumption}[theorem]{Assumption}

\newtheorem{remark}[theorem]{Remark}
\newtheorem{example}{Example}[section]

\allowdisplaybreaks[4]
\setlist[itemize]{leftmargin=2.2em,itemsep=.32em,topsep=.42em}
\setlist[enumerate]{leftmargin=2.5em,itemsep=.32em,topsep=.42em}
\newcommand{\R}{\mathbb{R}}
\newcommand{\dd}{\,\mathrm{d}}
\newcommand{\col}{\operatorname{col}}
\newcommand{\diag}{\operatorname{diag}}

\ExplSyntaxOn
\cs_set:Npn \__first_footerline:
  {
    \group_begin:
    \small\sffamily
    \ifnum\theblind>0\relax
    \else
      \__short_authors:
    \fi
    \group_end:
  }
\ExplSyntaxOff

\begin{document}
\let\WriteBookmarks\relax
\let\printorcid\relax

\shorttitle{Andy: A Mathematical Agent for Rigorous Proof and Autonomous Research}
\shortauthors{Zi'an Wang}

\title[mode=title]{Andy: A Mathematical Agent for Rigorous Proof and Autonomous Research}
\title[mode=sub]{Global Exponential Leader-Follower Synchronization of Delayed Heterogeneous Networks via Switching Hybrid Control}

\author[a,b]{Zi'an Wang}
\ead{wza1012@tongji.edu.cn}

\address[a]{School of Mathematical Sciences, Tongji University,
  Shanghai 200092, China}

\address[b]{Key Laboratory of Intelligent Computing and Applications
  (Tongji University), Ministry of Education, Shanghai 200092, China}

\address[]{\normalfont\textbf{Project homepage and source code:}
  \href{https://github.com/mowaiwaim/Andy}{\textcolor{blue}{https://github.com/mowaiwaim/Andy}}}

\begin{abstract}
Andy is an autonomous mathematical research agent that turns a mathematical problem into a traceable proof. It solves or verifies a submitted problem, formulates a literature-grounded new problem through a research-value gate, and carries it through proof construction and final verification.   It organizes proof steps in an executable DAG, verifies each step independently and binds the result to a certificate, retains verified work whose interfaces remain unchanged during local repair, and records the full path from problem formulation to final proof. The system separates proof generation from correctness evaluation and can acquire, retain, retrieve, and reuse knowledge from existing results.  Starting from a self-triggered impulsive consensus result \cite{hong2022consensus}, Andy formulates a global exponential leader-follower synchronization problem for delayed heterogeneous networks with switching communication topologies. The proposed hybrid control combines self-triggered impulses with execution delay and continuous feedback over a recovery window. After each delayed impulse, the feedback cancels the delayed error channel until the pre-impulse history leaves the active delay interval. Sufficient conditions for global exponential synchronization are established, Zeno behavior is excluded for both timing sequences, and a numerical example illustrates the result.
\end{abstract}

\begin{keywords}
Autonomous mathematical research agent \sep Executable proof DAG \sep Local verification \sep Incremental repair \sep Hybrid control \sep Exponential synchronization
\end{keywords}
\maketitle

\section{Introduction}

Large language models and mathematical reasoning systems can solve increasingly difficult fixed problems. Mathematical research requires a broader workflow. A research system must identify a meaningful question, ground it in relevant literature, construct a long proof, distinguish generation from correctness evaluation, and revise local failures without discarding verified work. These capabilities are often studied separately, so it remains unclear whether a mathematical agent can carry a research task from a verified starting point to a traceable theorem, proof, and numerical certificate.

Related systems provide useful context for this objective.
FunSearch pairs programs proposed by a pretrained large language model with a problem-specific systematic evaluator. AlphaGeometry uses a neural model to generate auxiliary constructions and a symbolic engine to complete geometry proofs \cite{romeraparedes2024mathematical,trinh2024alphageometry}. These systems demonstrate how learned generation can be combined with an explicit external evaluation or deduction mechanism.

Recent mathematical research agents move beyond fixed benchmark problems. Moonshine extracts structure from classical problems, formulates conjectures, builds connections, and identifies obstacles \cite{chen2026moonshine}. Research Math Agents (RMA) targets long-horizon research problems through literature grounding, structured knowledge, iterative proof refinement, and coordinated proposer and verifier roles \cite{zhao2026rma}. QED organizes multiple agents around failure modes observed in open-problem proving and emphasizes structured verification and expert assessment of generated proofs \cite{an2026qed}.

Recent systems distribute the research cycle differently. ProofCouncil uses an author--critic loop implemented by a conditional workflow-DAG library \cite{schmitt2026proofcouncil}. Danus coordinates parallel workers through a verifier-gated fact graph that stores proofs and logical dependencies and supports transitive revocation \cite{liu2026danus}. Rethlas and Archon couple informal research reasoning with Lean formalization \cite{ju2026conjecture}, while MMAT combines natural-language and formal-language provers with a knowledge-base manager in a full-cycle co-pilot architecture \cite{cao2026mechmath}. Table~\ref{tab:agent-capabilities} compares mechanisms explicitly documented in the primary reports.

\begin{table}[pos=!htbp]
\caption{Capabilities reported by representative mathematical research systems.}
\label{tab:agent-capabilities}
\centering
\scriptsize
\setlength{\tabcolsep}{2pt}
\renewcommand{\arraystretch}{1.18}
\begin{tabularx}{.98\linewidth}{@{}>{\raggedright\arraybackslash}p{.13\linewidth}>{\raggedright\arraybackslash}X*{6}{>{\centering\arraybackslash}p{.068\linewidth}}@{}}
\hline
System & Primary research object & \shortstack{New\\problem} & \shortstack{Proof\\decomp.} & \shortstack{Indep.\\check} & \shortstack{Long-term\\memory} & \shortstack{Formal\\proof} & \shortstack{Local\\revocation}\\
\hline
Moonshine \cite{chen2026moonshine} & Conjecture generation and theory exploration & E & NR & P & E & NR & NR\\
RMA \cite{zhao2026rma} & Literature-grounded long-horizon proofs & NR & E & E & P & NR & NR\\
QED \cite{an2026qed} & Natural-language proofs for open problems & NR & E & E & P & NR & P\\
ProofCouncil\newline\cite{schmitt2026proofcouncil} & Author--critic open-problem solving & NR & P & E & P & P & NR\\
Danus \cite{liu2026danus} & Verifier-gated fact-graph proof search & NR & E & E & E & NR & E\\
Rethlas--\newline Archon \cite{ju2026conjecture} & Informal reasoning followed by Lean formalization & NR & E & E & E & E & P\\
MMAT \cite{cao2026mechmath} & Natural- and formal-language research co-pilot & P & E & E & E & E & P\\
Andy & Value-gated problem formulation and incremental proof & E & E & E & E & NR & E\\
\hline
\end{tabularx}

\smallskip
\begin{minipage}{\linewidth}
\scriptsize E denotes an explicitly implemented or reported capability. P denotes partial, case-specific, run-local, or architecturally non-equivalent support. NR records only a lack of reporting in the cited primary source and leaves possible implementations unassessed.
\end{minipage}
\end{table}

Together, these developments motivate mathematical agents whose research objects, proof dependencies, and evaluations remain explicit and auditable.

To address this gap, we developed Andy, a verification-centered autonomous mathematical research agent. Andy combines solver and evaluator separation, literature-grounded problem generation, a research-value gate, DAG-based proof execution, independent verification, targeted repair, and reusable memory in one auditable workflow. The case study in this paper examines whether this workflow can transform a verified control-theoretic result into a technically meaningful new problem and complete the corresponding theorem, proof, and numerical verification.

The main contributions of Andy are summarized as follows.
\begin{itemize}
\item[(1)] We define an executable, version-bound proof DAG. Each node is a five-part research object consisting of a statement, assumptions, dependencies, a proof, and a verification certificate. The certificate is tied to the exact node version that it verifies. This representation converts a proof outline into an executable dependency structure with explicit admissibility and audit conditions.
\item[(2)] We implement a verification-controlled DAG executor. A node becomes available only after all its predecessors have been certified. The executor performs topological execution, freezes certified nodes, compares node interfaces after revision, propagates invalidation only when an interface changes, activates backup routes when a branch fails, and re-verifies the assembled proof. Local repair therefore retains certified work whose dependencies remain valid.
\item[(3)] We provide an end-to-end control-theoretic case study with a complete audit trail. Starting from a verified self-triggered impulsive consensus result, Andy revised the proposed problem eight times across nine versions and explored seven proof routes. It ultimately produced a recovery-window synchronization theorem for delayed heterogeneous networks under switching hybrid control, a complete proof, and a numerical illustration. The recorded trace identifies failed routes, preserved nodes, and the reason for selecting the final route.
\end{itemize}

The remainder of this paper is organized as follows. Section~2 presents the Andy agent architecture. Section~3 gives the control-theoretic case study.  Section~4 concludes the paper.

\section{Andy Agent Architecture}

Andy organizes problem solving, problem generation, proof construction, and verification as a single revisable process. Figure~\ref{fig:andy-workflow} summarizes the complete workflow, and the following subsections describe its main components.

\begin{figure}[pos=!htbp]
	\begin{minipage}{\linewidth}
		\centering
		\begin{tikzpicture}[
			node distance=14mm and 7mm,
			stage/.style={draw, rounded corners=1.5pt, align=center, minimum height=7.5mm,
				text width=0.265\linewidth, font=\footnotesize, inner sep=3pt},
			verify/.style={stage, fill=blue!7},
			research/.style={stage, fill=orange!9},
			proofstage/.style={stage, fill=green!8},
			flow/.style={-{Stealth[length=2mm]}, thick},
			feedback/.style={-{Stealth[length=2mm]}, thick, dashed}
			]
			\node[verify] (input) {Submitted problem\\and optional solution};
			\node[verify, right=of input] (solve) {Solve the problem or\\verify the supplied solution};
			\node[verify, right=of solve] (check) {Independent correctness check:\\premises, calculations, and logic};
			
			\node[research, below=of check] (formulate) {Literature-grounded\\new-problem formulation};
			\node[research, left=of formulate] (evaluate) {Evaluate importance, originality,\\feasibility, and coherence};
			\node[research, left=of evaluate] (revise) {Targeted revision until\\the selected threshold is met};
			
			\node[proofstage, below=of revise] (routes) {Explore proof routes and\\select a promising strategy};
			\node[proofstage, right=of routes] (dag) {Build a lemma DAG with\\explicit dependencies};
			\node[proofstage, right=of dag] (local) {Execute, verify, freeze,\\or locally repair each node};
			
			\node[proofstage, below=of local] (assembly) {Assemble the complete proof\\and perform final verification};
			\node[proofstage, left=of assembly] (certificate) {Deliver the proof, certificates,\\literature links, and research log};
			
			\draw[flow] (input) -- (solve);
			\draw[flow] (solve) -- (check);
			\draw[flow] (check) -- (formulate);
			\draw[flow] (formulate) -- (evaluate);
			\draw[flow] (evaluate) -- (revise);
			\draw[flow] (revise) -- (routes);
			\draw[flow] (routes) -- (dag);
			\draw[flow] (dag) -- (local);
			\draw[flow] (local) -- (assembly);
			\draw[flow] (assembly) -- (certificate);
			\draw[feedback] (check.north) to[out=110,in=70,looseness=0.9]
			node[midway,above,font=\scriptsize]{repair if needed} (solve.north);
			\coordinate (reviseback) at ([yshift=5mm]revise.north);
			\coordinate (evaluateback) at ([yshift=5mm]evaluate.north);
			\draw[feedback] (revise.north) -- (reviseback) --
			node[midway,above,font=\scriptsize]{re-evaluate} (evaluateback) -- (evaluate.north);
			\coordinate (localback) at ([yshift=5mm]local.north);
			\coordinate (dagback) at ([yshift=5mm]dag.north);
			\draw[feedback] (local.north) -- (localback) --
			node[midway,above,font=\scriptsize,fill=white,inner sep=1pt]
			{repair affected node and descendants} (dagback) -- (dag.north);
		\end{tikzpicture}
		\caption{The research workflow of Andy.}
		\label{fig:andy-workflow}
	\end{minipage}
\end{figure}

\subsection{Solver--Evaluator Separation}

Andy uses a dual-model architecture that separates the roles of solver and evaluator. The main solver uses DeepSeek V4 Pro. It is responsible for solving the original problem, generating proofs, generating new problems, and revising proofs. The mathematical correctness evaluator uses Claude Sonnet 5. It reviews answers and proofs along three dimensions: use of premises, correctness of calculations, and logical completeness. The evaluator cannot directly modify a proof. It can only provide structured verification conclusions, error locations, and revision suggestions. The main solver then completes the revisions. This separation of roles prevents the solver model from approving its own answer without external review.

To use Andy, the user must first provide a problem. If no answer is provided, Andy independently constructs a solution or proof and submits it to the correctness evaluator for review. If an answer is provided, Andy directly verifies that answer without solving the problem independently again.

\begin{remark}
During this study, Codex running GPT-5.6 Sol was placed outside Andy and used as a simulated human monitor. It inspected the exposed intermediate reasoning, identified mathematical and strategic errors, and returned targeted guidance. The external monitor supplied the guidance, and Andy's solver retained responsibility for implementing every revision. This setup shows that the current workflow can still benefit from external oversight when deep reasoning errors are difficult for the internal roles to detect.
\end{remark}

\subsection{Literature-grounded Problem Generation}

After the original problem and its proof pass verification, Andy proposes a new research problem. Before a task begins, the user can separately specify the direction for generating the problem or upload reference papers. This guidance is used only during the generation, evaluation, and revision of the new problem. It does not affect the solution and verification of the original problem.

Andy analyzes the common features, differences, complementary relations, and research gaps among the papers. In the new problem, it explains which papers were used, which conditions or methods were inherited, and which substantive modifications were made. When the available materials are insufficient to assess originality, related work, or proof tools, Andy can use Tavily to search for related literature as needed.

\subsection{Research-value Gate}

The basic evaluation of a new problem covers four dimensions: importance, originality, feasibility, and coherence. Importance measures whether solving the problem could advance a theory, method, or application. Originality checks whether the problem only renames, rewrites, or directly restates an existing problem. Feasibility examines whether there is a concrete proof strategy, usable tools, and a reasonable point of entry for the research. Coherence examines whether the new problem follows naturally from the original problem or the references and whether it has one clear research objective. It also checks that the problem is not a simple combination of several small problems. If the user provides reference papers, the evaluation system adds a literature support dimension. This dimension checks whether the new problem has traceable links to the papers and contains substantive modifications.

The user can specify the target difficulty of the new problem. Every dimension is scored on a ten-point scale. An easy problem requires a total score of at least $6.0$ and a score of at least $4.5$ in every dimension. A standard problem requires a total score of at least $7.0$ and a score of at least $5.5$ in every dimension. A research-level problem requires a total score of at least $7.5$ and a score of at least $6.0$ in every dimension. A pure rewriting of an existing problem triggers a hard gate. A simple combination of several conclusions, the lack of a concrete feasible method, or the inclusion of several parallel subproblems also triggers a hard gate. Such a problem does not pass verification. The user's choice of difficulty changes only the threshold for research value and the research cost. It does not reduce the correctness requirements for the final proof.

If a problem does not pass the evaluation, the evaluation stage explains the reason for each score deduction. It also gives revision instructions that include the specific defect, the mathematical modification, and the acceptance criterion after modification. The generator then revises the same problem draft in a targeted manner and returns it to the evaluation stage for another review. By default, the system automatically enters the proof stage after the problem reaches the threshold. The user can also choose to pause after the evaluation and intervene manually. Algorithm~\ref{alg:problem-gating} summarizes this formulation, evaluation, and targeted-revision loop.

\begin{algorithm}[!htbp]
\caption{Problem formulation and quality gating}
\label{alg:problem-gating}
\begin{algorithmic}[1]
\Require Verified starting result \(R\), literature \(L\), user direction \(U\), and target thresholds \(\theta\)
\Ensure Accepted problem \(Q^\star\) and audit record \(\mathcal A\)
\State \(Q_0\gets\Call{Formulate}{R,L,U}\) and initialize \(\mathcal A\)
\For{\(j=0,1,\ldots\)}
  \State \((\mathbf s_j,\mathbf g_j)\gets\Call{Evaluate}{Q_j,L,\theta}\)
  \State Append \(Q_j\), dimension scores \(\mathbf s_j\), and gate verdicts \(\mathbf g_j\) to \(\mathcal A\)
  \If{all score thresholds and hard gates pass}
    \State \(Q^\star\gets Q_j\); \textbf{return} \((Q^\star,\mathcal A)\)
  \EndIf
  \State \(\Delta_j\gets\Call{TargetedRevision}{Q_j,\mathbf s_j,\mathbf g_j}\)
  \State \(Q_{j+1}\gets\Call{Revise}{Q_j,\Delta_j}\)
\EndFor
\end{algorithmic}
\end{algorithm}

\subsection{DAG Proof Execution}

For the proof of a new problem, Andy provides two routes: a rigorous DAG proof route and a deep research route. A directed acyclic graph (DAG) is a directed graph with no directed cycles and therefore admits a topological ordering. In artificial intelligence, DAGs are widely used to encode conditional-dependence structures and organize probabilistic inference \cite{pearl1988probabilistic}.

Andy adapts this dependency structure to proof construction. The rigorous DAG proof route first generates only a proof outline, necessary definitions, and lemmas. Andy then represents the proof as a directed acyclic graph. Each node corresponds to a lemma or proof step with explicit inputs, assumptions, and a conclusion. Directed edges represent logical dependencies between nodes. The system first checks node identifiers, dependency relations, and reachability of the target. It then executes the nodes one by one in topological order.

DAG mode starts a main proof branch. If the main branch cannot be completed mathematically, Andy activates an alternative branch. If the new DAG executor and the alternative branch both fail, the system can also use a multibranch proof chain as a fallback and try several proof routes.

For a research-level problem, the deep research route can be selected. This route first explores several proof strategies at low cost. It compares only the key construction, required lemmas, main risks, and likelihood of completion for each route. After selecting the most promising route, the system converts it into a DAG. It then completes the proof rigorously through node-by-node execution, local verification, freezing, and final overall verification. For a difficult and open-ended problem, this route first obtains the necessary breadth of exploration at low cost. It then concentrates most of the computational effort on the most promising route.

For execution, a node version is recorded as \(v=(S_v,A_v,D_v,\Pi_v,C_v)\), where \(S_v\), \(A_v\), \(D_v\), \(\Pi_v\), and \(C_v\) are its statement, assumptions, dependency set, proof, and verification certificate, respectively. Its dependency interface is \(I_v=(S_v,A_v,D_v)\), and \(h_v=H(I_v)\) is the corresponding interface fingerprint. The certificate is bound to the exact node version and its execution metadata. Algorithm~\ref{alg:dag-execution} gives the corresponding topological execution and verification procedure.

\begin{algorithm}[!htbp]
\caption{Proof-DAG execution}
\label{alg:dag-execution}
\begin{algorithmic}[1]
\Require Accepted problem \(Q^\star\) and planned main and backup routes
\Ensure A frozen target node or a certified failure record
\State Build a versioned DAG \(G\) and reject cycles, missing dependencies, or an unreachable target
\While{the target node is not frozen}
  \State Select a ready node \(v\) in topological order; every node in \(D_v\) must be frozen
  \State \(\Pi_v\gets\Call{Execute}{S_v,A_v,D_v}\)
  \State \(C_v\gets\Call{IndependentVerify}{S_v,A_v,D_v,\Pi_v}\)
  \If{\(C_v\) passes}
    \State Bind \(C_v\) to the node version and execution metadata; \(\Call{Freeze}{v}\)
  \Else
    \State Repair \(v\), recompute \(h_v\), and apply Algorithm~\ref{alg:invalidation-assembly}
  \EndIf
  \If{the active route is mathematically infeasible}
    \State Activate the highest-ranked unused backup route
  \EndIf
\EndWhile
\State Assemble the proof from frozen nodes and perform an independent final verification
\end{algorithmic}
\end{algorithm}

\begin{remark}
	The present evaluation is preliminary and focuses on the efficiency of DAG execution in one representative task. The main purpose of the DAG is to freeze the parts that have been confirmed as correct and confine errors to local regions. This reduces repeated generation and repeated verification of the entire proof.
	
	We used Andy twice to process the problem ``Study the zeros and monotonicity of the function $x^3-x$.'' An ordinary proof attempt and verification process serves as the baseline. It took $1$ hour, $22$ minutes, and $12$ seconds. The DAG process took approximately $37$ minutes and $59.7$ seconds. The time decreased by approximately $53.8\%$.
	
	The new research problems generated by Andy in the two runs were not identical because of the randomness of the LLM. The experiment shows that the DAG avoided unnecessary execution of complete branches and substantially reduced the running time in this case. The entire time difference cannot be attributed to the DAG.
	
\end{remark}

\subsection{Verification and Targeted Repair}

When a node is executed, the solver can use only the available materials and dependency nodes that have passed verification. After each new node is completed, it undergoes a separate local verification of premises, calculations, and logic. A node is frozen after it passes verification. If a node fails, Andy repairs only that node and the descendants affected by it. Unrelated sibling nodes are not regenerated. If a revision changes only the proof text and leaves the proposition, assumptions, and dependency interface of the node unchanged, verified downstream results can be retained. After all necessary nodes pass, the system assembles the complete proof and performs a unified final verification. If the assembly stage requires a new bridging result, that result must first become a new node and pass a separate verification. Let \(\operatorname{Desc}_G(v)\) denote the direct and transitive descendants of node \(v\) in \(G\). Algorithm~\ref{alg:invalidation-assembly} makes the repair and assembly contract explicit.

\begin{algorithm}[!htbp]
\caption{Invalidation and final assembly}
\label{alg:invalidation-assembly}
\begin{algorithmic}[1]
\Require Proof DAG \(G\), revised node \(v\), and old and new interface fingerprints \(h_v^{\mathrm{old}},h_v^{\mathrm{new}}\)
\Ensure Updated valid DAG and a final verification record
\State \(\mathcal I(v)\gets
\begin{cases}
\{v\}, & h_v^{\mathrm{old}}=h_v^{\mathrm{new}},\\
\{v\}\cup\operatorname{Desc}_G(v), & h_v^{\mathrm{old}}\ne h_v^{\mathrm{new}}
\end{cases}\)
\ForAll{\(w\in\mathcal I(v)\)}
  \State Revoke the frozen state and certificate of \(w\)
\EndFor
\State Re-execute affected nodes in topological order under Algorithm~\ref{alg:dag-execution}
\State Assemble the final proof using frozen nodes only
\If{assembly introduces a new bridging proposition}
  \State Create a new node for that proposition and verify it before continuing
\EndIf
\State Independently verify the assembled proof and bind the final record to all cited node versions
\end{algorithmic}
\end{algorithm}

\FloatBarrier

The executor maintains four system invariants.
\begin{itemize}
	\item[(1)] Dependencies must consist exclusively of verified nodes.
	\item[(2)] Each certificate is strictly bound to a specific statement, proof, context, model, prompt, and code version.
	\item[(3)] Any bridging proposition introduced during assembly must be represented and verified as a new node.
	\item[(4)] The final proof must not cite any invalidated node.
\end{itemize}

\subsection{Memory and Knowledge Acquisition}

Andy is designed for long-term use and debugging by researchers. It therefore also has a memory function. Andy manages logs and long-term memory separately. The logs provide complete records of problem versions, model rounds, reasoning content returned by model interfaces, tool calls, search results, evaluation reasons, verification certificates, DAG node revisions, frozen and invalidated states, causes of errors, and task checkpoints. These materials are saved as session records and written to the project-level research log. A failed or interrupted task can therefore resume from the corresponding stage.

Long-term memory contains selected reusable research information. Andy can store proof-method preferences and evaluation preferences explicitly stated by the researcher. Examples include ``Prefer the Lyapunov method for problems of this type in the future'' and ``A rewriting that only changes notation should not receive a high originality score.'' It also stores verified conclusions, successful proof patterns, failed routes, key difficulties, and connections among papers. A new task retrieves only a small amount of memory relevant to the current problem. Failure records help avoid repeating unsuccessful routes and are stored separately from mathematical evidence.

Andy organizes its data around a runtime home directory. This directory contains configurations and credentials, project workspaces, session logs, a session database, a knowledge base, long-term memory, Agent definitions, Skills, Tools, runtime logs, and MCP server description files. During a research task, Andy can independently select memory retrieval, knowledge-base queries, literature searches, and verification tools according to the current stage. It updates the research state through a cycle of planning, execution, verification, and local revision.

We also designed a research front end to make the system easier to use and review. The front end integrates the creation of a research task, a real-time research workspace, and the final report into a complete workflow. Users can upload materials for the main problem and reference papers. They can separately enter the research requirements for the original problem and the direction for generating a new problem. They can also select the target difficulty and proof mode. During execution, users can view stable stage progress, total running time, the actual \texttt{reasoning\_content} returned by the model interface, the verification process, evaluation reasons, literature connections, and the dependencies, revisions, frozen states, and certificate states of DAG nodes. If users find an error in the model's reasoning, they can append a prompt to correct the research direction. This provides an interface for human intervention. After the task is complete, users can directly download research outputs at different levels from the front end.

Andy can ultimately deliver the original problem and its verification conclusion or complete solution, the new problem generated by Andy, itemized scores and revision records, the complete proof of the new problem, and connections to reference papers and retrieved literature. If the proof fails, it returns the reason for failure and subsequent reduced-scope problems. The system can export Markdown, structured JSON, and both complete and concise PDF reports. The complete report retains scores, tools, certificates, versions, and the research process for review and auditing. The concise report retains only the original problem and its complete solution, together with the problem generated by Andy and its complete proof. It is suitable for direct reading and communication.

\FloatBarrier

\section{Case Study}

\subsection{Source Problem}

The verified source problem is the self-triggered impulsive consensus problem of Hong and Zhang \cite{hong2022consensus}. It studies switched delay multi-agent systems under self-triggered impulsive control. Andy uses this result as a starting point and asks how its scheduling and impulsive-control structure can be extended to a heterogeneous leader-follower network with switching topology and impulse execution delay.

\subsection{Problem Proposed by Andy}

We made only minor formatting adjustments to the model and theorem proposed by Andy. The result is presented below.

This case study considers a class of leader-follower heterogeneous networks with switching communication topologies, time-varying delays, and impulse execution delays. The aim is to establish sufficient conditions for synchronization under the joint action of continuous and impulsive control. The model and its regularity assumptions are introduced first.

Let $N$ and $n$ be positive integers. Throughout this section, $I_m$ denotes the $m\times m$ identity matrix, $\otimes$ denotes the Kronecker product, and $\|\cdot\|$ denotes the Euclidean vector norm or its induced matrix norm. For symmetric matrices, $X\preceq Y$ means that $Y-X$ is positive semidefinite.

Let $\{s_k\}_{k\ge0}$ be the sampling instants determined by the self-triggered algorithm introduced below. Let $\tau>0$ be the known impulse execution delay, and define the corresponding impulse execution instants by $t_k:=s_k+\tau$. All state trajectories are taken to be right-continuous at the impulse execution instants.

Let $\bar h>0$, let $h:[s_0,\infty)\to\R$ be a time-varying delay, and let $f,g:\R^n\to\R^n$ be nonlinear functions.

Consider the following $n$-dimensional leader:
\begin{equation}
  \dot s(t)=-Cs(t)+Af(s(t))+Bg(s(t-h(t))),
  \label{eq:leader}
\end{equation}
where $C,A,B\in\R^{n\times n}$.

The $i$th follower, $i=1,\dots,N$, satisfies the hybrid dynamics
\begin{equation}
	\left\{
	\begin{aligned}
		\dot x_i(t)
		&=-C_i x_i(t)+A_i f(x_i(t))
		+B_i g\bigl(x_i(t-h(t))\bigr)+u_i^c(t),
		&& t\neq t_k,\\
		\Delta x_i(t_k)
		&:=x_i(t_k^+)-x_i(t_k^-)=u_i^p(t_k),
		&& t=t_k,
	\end{aligned}
	\right.
	\label{eq:follower}
\end{equation}
where $C_i,A_i,B_i\in\R^{n\times n}$ may differ among the nodes. The terms
$u_i^c(t)$ and $u_i^p(t_k)$ denote the continuous and impulsive control inputs, respectively.
The communication topology among the network nodes can switch and will be introduced below.
\begin{assumption}\label{ass:regularity}
	The time-varying delay $h$ is absolutely continuous and satisfies
$
		0\le h(t)\le\bar h,
		\dot h(t)\le\delta<1
		\,\text{a.e.}
		$
	
	The nonlinear functions $f$ and $g$ are globally Lipschitz continuous. Namely, there exist $l_f,l_g>0$ such that, for any $x,y\in\R^n$,
	\begin{equation}
		\|f(x)-f(y)\|\le l_f\|x-y\|,
		\qquad
		\|g(x)-g(y)\|\le l_g\|x-y\|.
		\label{eq:lipschitz}
	\end{equation}

\end{assumption}

\subsection{Key Idea: Recovery Window}

Andy adopts the following hybrid control idea. The network error is sampled, and impulsive control is applied after a prescribed execution delay. A recovery window whose length equals the upper bound on the time-varying state delay is then introduced. During this window, the continuous controller temporarily cancels the delay channel in the error system. When the window ends, the pre-impulse history has left the active delay interval. The delayed error channel is then reactivated, and the stability analysis resumes with the complete Lyapunov-Krasovskii functional.

The continuous controller is designed as
\begin{align}
	u_i^c(t)={}&(C_i-C)s(t)+(A-A_i)f(s(t))
	+(B-B_i)g(s(t-h(t)))\nonumber\\
	&-K_i\bigl(x_i(t)-s(t)\bigr)
	-(1-q(t))B_i\bigl[g(x_i(t-h(t)))-g(s(t-h(t)))\bigr],
	\label{eq:continuous-controller}
\end{align}
where $K_i\in\R^{n\times n}$ is the feedback gain. The phase indicator $q(t)\in\{0,1\}$ is right-continuous, distinguishes the recovery phase from the normal-flow phase, and is coordinated with the impulse instants. Specifically, (i) choose the initial sampling instant $s_0$ such that $q(s_0^-)=1$, and set $q(t)=1$ on the initial interval $[s_0,t_0)$. (ii) Set $q(t_k)=q(t_k^+)=0$ whenever $t=t_k$. (iii) Keep $q=0$ in the recovery window $[t_k,t_k+\bar h)$, and set $q=1$ on $[t_k+\bar h,t_{k+1})$. (iv) A topology switch that does not coincide with an impulse does not reset $q$.

We next introduce impulsive control with a switching communication topology. First, consider the switching graphs. Let $l$ be a positive integer, and let $\sigma(t)\in\mathcal S:=\{1,\ldots,l\}$ be a left-continuous, piecewise constant switching signal. In mode $r\in\mathcal S$, the communication graph $\mathcal G_r$ is a connected undirected graph. Its Laplacian matrix is denoted by
$
L_r=(\ell_{ij}^r)_{N\times N}.
$

Let $a_{ij}^r$ denote the communication weight between nodes $i$ and $j$ in mode $r$. Then
\[
\ell_{ij}^r=
\begin{cases}
	-a_{ij}^r, & i\neq j,\\[1mm]
	\displaystyle\sum_{m\neq i}a_{im}^r, & i=j.
\end{cases}
\]

Let $N_\sigma(t,u)$ denote the number of topology switches in the interval $[u,t)$, and assume that
\begin{equation}
	N_\sigma(t,u)\le \frac{t-u}{T_a}+N_0,
	\qquad t\ge u\ge s_0,
	\label{eq:adt}
\end{equation}
where $T_a>0$ is the average dwell time and $N_0\ge0$ is the chatter bound.

\begin{assumption}

The mode schedule satisfies
\begin{equation}
	\sigma(t)=r_k:=\sigma(s_k^-),
	\qquad
	s_k\le t\le t_k+\bar h,
	\qquad k\ge0.
	\label{eq:mode-hold}
\end{equation}
It also satisfies
\begin{equation}
	\sigma\bigl((t_k+\bar h)^+\bigr)=r_k,
	\qquad k\ge0.
	\label{eq:mode-hold-right}
\end{equation}
Thus, the topology mode remains $r_k$ from each sampling instant $s_k$ to the end of the corresponding recovery window. Its right limit at the endpoint of the recovery window is also $r_k$. In particular, no topology switching is allowed in $[s_k,t_k+\bar h]$, including at $t_k$. An immediate switch to the right of the endpoint of the recovery window is also excluded.
	
\end{assumption}

Choose the pinning matrix
$
D=\diag(d_1,\ldots,d_N), d_i\ge0,
$
where $d_i>0$ means that the $i$th follower has an additional leader-pinning channel in the impulsive coupling. When $d_i=0$, the node receives the leader's influence only indirectly through its neighbors in the impulsive topology. Given the impulse gain $\mu_k>0$, define the topology-dependent impulse mapping matrix
$
	M_k:=\mu_k\bigl(L_{\sigma(s_k^-)}+D\bigr),
$
where the active topology immediately before the sampling instant $s_k$ is used. The mode-scheduling assumption in \eqref{eq:mode-hold} and \eqref{eq:mode-hold-right} avoids a mismatch between the sampling topology and the impulse execution topology.

\begin{remark}
	The impulsive control law selects the corresponding Laplacian matrix according to the active communication topology at the sampling instant. It is therefore a topology-dependent switching impulsive controller. This design can describe practical situations such as changes in adjacency relations caused by moving nodes, failures and recovery of communication links, and time-sharing schedules for wireless channels. The controller can then implement impulsive corrections according to the communication relations available at the sampling instant.
\end{remark}

Define the synchronization errors by
$
	e_i(t):=x_i(t)-s(t),
	e(t):=\col(e_1(t),\ldots,e_N(t)).
$

Assume that the leader and follower state histories are continuous on $[s_0-\bar h,s_0]$. Define the initial error segment by
\[
	e_{s_0}(\theta):=e(s_0+\theta),
	\qquad \theta\in[-\bar h,0],
\]
so $e_{s_0}\in\mathcal C([-\bar h,0],\R^{Nn})$, where $\mathcal C$ denotes the space of continuous functions. Define its norm by
\[
	\|e_{s_0}\|_{\bar h}
	:=\sup_{-\bar h\le\theta\le0}
	\|e(s_0+\theta)\|.
\]

Let $[z]_i$ denote the $i$th $n$-dimensional block of the stacked vector $z\in\R^{Nn}$. Since the leader state $s(t)$ is continuous at the impulse execution instants, the impulsive control input is designed as
\begin{align}
	u_i^p(t_k)
	&:=\left[(M_k\otimes I_n)e(s_k^-)\right]_i-e_i(t_k^-)
	\label{eq:impulsive-controller}\\
	&=-e_i(t_k^-)
	+\mu_k\left[
	\sum_{j=1}^N a_{ij}^{\sigma(s_k^-)}
	\bigl(e_i(s_k^-)-e_j(s_k^-)\bigr)
	+d_i e_i(s_k^-)
	\right].
	\nonumber
\end{align}

Since $x_i(t_k)=x_i(t_k^-)+u_i^p(t_k)$, the stacked error after impulse execution is
\begin{equation}
	e(t_k)
	=(M_k\otimes I_n)e(s_k^-).
	\label{eq:impulse-map}
\end{equation}

Substituting \eqref{eq:continuous-controller} into the leader and follower systems in \eqref{eq:leader} and \eqref{eq:follower} gives the unified closed-loop error system
\begin{equation}
	\left\{
	\begin{aligned}
		\dot e_i(t)
		&=-(C_i+K_i)e_i(t)+A_i\Delta f_i(t)
		+q(t)B_i\Delta g_i(t-h(t)),
		&& t\neq t_k,\\
		e_i(t_k)
		&=\left[(M_k\otimes I_n)e(s_k^-)\right]_i,
		&& t=t_k,
	\end{aligned}
	\right.
	\label{eq:error-system}
\end{equation}
where
$
\Delta f_i(t):=f(x_i(t))-f(s(t)),\,
\Delta g_i(t-h(t))
:=g(x_i(t-h(t)))-g(s(t-h(t))).
$

It follows from \eqref{eq:error-system} that the delay channel in the error system is exactly canceled when $q(t)=0$. This delay channel is reactivated when $q(t)=1$. Furthermore, suppose that the error is identically zero at a given instant and throughout the required delay-history interval. Then $\Delta f_i(t)=0$ and $\Delta g_i(t-h(t))=0$, so the continuous flow satisfies $\dot e_i(t)=0$. If $e(s_k^-)=0$, the impulse mapping gives
$
e(t_k)=(M_k\otimes I_n)e(s_k^-)=0.
$
Therefore, the continuous flow and the impulsive jumps both preserve the zero-error state. Once the system reaches synchronization, it remains synchronized.

We next introduce self-triggered impulsive control. Let the triggering parameters be
$\bar\lambda\ge0$ and $\rho>0$, and the parameter sequence be
$\{a_k\}_{k\ge0}$, where $a_k>0$. Assume additionally that $\tau\ge\bar h$.

The selected triggering parameter $\bar\lambda$ will be further constrained in the theorem below by the Lyapunov matrices and their related constants.

At the computed sampling instant $s_k$, set $W_k^2:=e^{\bar\lambda\tau}$ and define the self-triggering function for a candidate instant $t\ge t_k$ as
\begin{equation}
	\Psi_k(t)
	:=(\bar\lambda+\rho)(t-t_k)-a_k-\ln W_k^2.
	\label{eq:trigger-function}
\end{equation}

Since $\bar\lambda+\rho>0$, the function $\Psi_k(t)$ is strictly increasing in $t$. Therefore, the next sampling instant is defined as the first instant at which $\Psi_k(t)$ reaches a nonnegative value. Namely,
\begin{equation}
	s_{k+1}
	:=\inf\bigl\{t>t_k:\Psi_k(t)\ge0\bigr\}.
	\label{eq:self-trigger}
\end{equation}

Define the time length from the $k$th actual impulse execution instant $t_k$ to the next sampling instant $s_{k+1}$ by
\begin{equation}
	\begin{aligned}
		\Phi_k
		:=s_{k+1}-t_k
		=\frac{a_k+\ln W_k^2}{\bar\lambda+\rho}
		=\frac{a_k+\bar\lambda\tau}{\bar\lambda+\rho}.
	\end{aligned}
	\label{eq:Phi}
\end{equation}

Hence,
$
	s_{k+1}=t_k+\Phi_k
	=s_k+\tau+\Phi_k,
$

Therefore, the sampling sequence and the impulse execution sequence are interlaced as
$
	s_0<t_0<s_1<t_1<\cdots,
$
and
$
	t_{k+1}-t_k
	=s_{k+1}-s_k
	=\tau+\Phi_k>0.
$

Both the sampling intervals and the impulse execution intervals have the uniform strict positive lower bound $\tau$. Thus, any finite time interval contains only finitely many sampling instants and impulse execution instants. Therefore, neither the self-triggered sampling sequence nor the impulse execution sequence exhibits Zeno behavior.

\begin{remark}
	The recovery-window feedback in \eqref{eq:continuous-controller} reveals a useful control-design idea. It matches a controller term with a difficult term in the error dynamics and removes that term during the part of the hybrid evolution in which it is most troublesome.

	Andy's construction complements standard delay-analysis methods with a structural heuristic. It first identifies the term that makes the Lyapunov derivative difficult. The controller and the Lyapunov function are then co-designed so that this term is canceled or absorbed in a controlled phase. This principle also appears in PD-controlled multi-weighted networks. A derivative feedback term can be matched with a coupling-weighted quadratic term in the Lyapunov function so that the corresponding cross-derivative terms cancel pairwise \cite{wang2021pdpi,wang2024finite}. In PI designs, related weight terms can be balanced in an augmented Lyapunov functional \cite{wang2021pdpi}.

	A broader version of this idea appears in recursive adaptive control. Coordinate changes, parameter-update laws, and the feedback law are constructed step by step, and stability is established through a Lyapunov argument \cite{kanellakopoulos1991systematic}. A related learning-based approach jointly trains a nonlinear controller and a neural Lyapunov function. An SMT-based falsification step checks the Lyapunov conditions and returns counterexamples when they fail \cite{zhou2022neural}.

	These connections suggest that Andy's control design can inform the construction of Lyapunov functions and PID-type controllers for systems with several coupling weights, delayed channels, or derivative couplings.

	Exact delay-channel cancellation in \eqref{eq:continuous-controller} is demanding in practice because it requires accurate knowledge of the delayed model and online access to the complete delayed states. Mature delay-system methods can retain the delayed channel and estimate its effect through Lyapunov-Krasovskii functionals, comparison arguments, and delayed impulsive inequalities \cite{hong2022consensus,zhang2023stabilizing,xie2024selftriggered}. When exact delayed states are unavailable, a natural extension is to use observer-based or adaptive approximate cancellation and bound the residual mismatch with standard delay-analysis tools.
\end{remark}

\subsection{Main Theorem}

For any candidate symmetric positive definite matrices $P_r,S_r,R_r\in\R^{n\times n}$, constants $\lambda_{1,r}\ge0$, and scalars $\varepsilon_{i,r}>0$ appearing in condition (i) below, where $r\in\mathcal S$ and $i=1,\ldots,N$, define
\begin{align*}
	Q_{i,r}:={}&
	P_r(C_i+K_i)+(C_i+K_i)^{\mathsf T}P_r,\\
	\Lambda_{i,r}^{(11)}:={}&
	-Q_{i,r}+2\|P_rA_i\|l_fI_n
	+\varepsilon_{i,r}I_n
	+\lambda_{\max}(S_r)I_n
	+\bar h\lambda_{\max}(R_r)I_n,\\
	\Lambda_{i,r}^{(22)}:={}&
	\varepsilon_{i,r}^{-1}\|P_rB_i\|^2l_g^2I_n
	-(1-\delta)\lambda_{\min}(S_r)I_n.
\end{align*}

Further define
\begin{align*}
	\beta_r^{(0)}
	&:=\max_{1\le i\le N}
	\bigl\{-\lambda_{\min}(Q_{i,r})
	+2\|P_rA_i\|l_f\bigr\}, \qquad
	\beta_r
	:=\frac{\max\{0,\beta_r^{(0)}\}}
	{\lambda_{\min}(P_r)},\\
	\gamma_r^{(0)}
	&:=\max_{1\le i\le N}
	\bigl\{\lambda_{\max}(Q_{i,r})
	+2\|P_rA_i\|l_f\bigr\}, \qquad
	\gamma_r
	:=\frac{\max\{0,\gamma_r^{(0)}\}}
	{\lambda_{\min}(P_r)}.
\end{align*}

Let
\begin{align*}
	\widetilde\chi_r
	&:=e^{(\beta_r+\gamma_r)\bar h}
	\left(
	1+
	\frac{
		\bar h\lambda_{\max}(S_r)
		+\frac{\bar h^2}{2}\lambda_{\max}(R_r)
	}{
		\lambda_{\min}(P_r)
	}
	\right),
	\\
	\widetilde\chi_{\max}
	&:=\max_{r\in\mathcal S}\widetilde\chi_r,
	\qquad
	\beta_+:=\max_{r\in\mathcal S}\beta_r,\qquad
	\bar\lambda_1
	:=\max_{r\in\mathcal S}\lambda_{1,r}.
\end{align*}

For a candidate sequence $\{\eta_k\}_{k\ge0}\subset(0,1)$ appearing in condition (ii) below, define the effective contraction over the $k$th impulsive period by
\begin{equation}
	\widetilde c_k
	:=-\ln\eta_k-\beta_+\bar h
	-\ln\widetilde\chi_{\max}.
	\label{eq:ck}
\end{equation}

For any $\alpha>0$, define
$
	\alpha_{\rm eff}:=\alpha+\bar\lambda-\bar\lambda_1.
$

\begin{theorem}\label{thm:main}
Consider the systems in \eqref{eq:leader} and \eqref{eq:follower} under the continuous controller \eqref{eq:continuous-controller}, the impulsive controller \eqref{eq:impulsive-controller}, and the self-triggering mechanism defined by \eqref{eq:trigger-function} through \eqref{eq:Phi}. Assumption \ref{ass:regularity}, the average dwell-time constraint \eqref{eq:adt}, the execution-delay requirement $\tau\ge\bar h$, and the mode-scheduling assumption in \eqref{eq:mode-hold} and \eqref{eq:mode-hold-right} are imposed throughout.
	If the following conditions hold:
	
	\begin{enumerate}[label=\textup{(\roman*)},leftmargin=3em]
		\item
		For each switching mode $r\in\mathcal S$, there exist symmetric positive definite matrices
		$
		P_r,S_r,R_r\in\R^{n\times n},
		$
		and a constant $\lambda_{1,r}\ge0$ that is valid for all nodes. For every
		$i=1,\ldots,N$, there exists $\varepsilon_{i,r}>0$ such that
		\begin{equation}
			\begin{bmatrix}
				\Lambda_{i,r}^{(11)}&0\\
				0&\Lambda_{i,r}^{(22)}
			\end{bmatrix}
			\preceq
			\lambda_{1,r}
			\begin{bmatrix}
				P_r&0\\
				0&0
			\end{bmatrix},
			\label{eq:flow-lmi}
		\end{equation}
		In addition,
		$
			\bar\lambda\ge\max\{\bar\lambda_1,\beta_+\}.
		$
		
		\item
		
		There exists a sequence $\{\eta_k\}_{k\ge0}\subset(0,1)$ such that, for every $k\ge0$, the impulsive map satisfies
		\begin{equation}
			M_k^{\mathsf T}M_k\preceq\eta_kI_N.
			\label{eq:impulse-contraction-condition}
		\end{equation}
		In addition, there exists a constant $\mu\ge1$ such that, for all
		$r,j\in\mathcal S$,
		\begin{equation}
			P_r\preceq\mu P_j,\qquad
			S_r\preceq\mu S_j,\qquad
			R_r\preceq\mu R_j.
			\label{eq:mode-comparison}
		\end{equation}
		
		\item
		The self-triggering parameters satisfy
		\begin{equation}
			a_k>\rho\bar h,
			\qquad k\ge0.
			\label{eq:trigger-margin}
		\end{equation}
		There exist constants $\alpha>0$ and $\varphi>0$ such that, for all
		$t\ge u\ge s_0$,
		\begin{equation}
			\begin{aligned}
				&\bar\lambda(t-u)
				+\frac{\ln\mu}{T_a}(t-u)
				+\sum_{k:\,u\le t_k<t}
				\bigl[
				-\widetilde c_k
				+(\bar\lambda+\rho)\tau
				\bigr]
				\le-\alpha(t-u)+\varphi.
			\end{aligned}
			\label{eq:cumulative-decay}
		\end{equation}
	\end{enumerate}
	
	Then $\alpha_{\rm eff}\ge\alpha>0$, and the closed-loop system achieves global exponential leader-follower synchronization. Specifically, there exists a constant $C_{\rm GE}\ge1$, independent of the initial history, such that every admissible initial history satisfies
	\begin{equation}
		\|e(t)\|
		\le
		C_{\rm GE}\|e_{s_0}\|_{\bar h}
		e^{-\frac{\alpha_{\rm eff}}{2}(t-s_0)},
		\qquad t\ge s_0,
		\label{eq:global-exp}
	\end{equation}

\end{theorem}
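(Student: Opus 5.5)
We use a mode-dependent Lyapunov--Krasovskii functional and split each impulsive period $[t_k,t_{k+1})$ into the recovery window $[t_k,t_k+\bar h)$, where $q=0$, and the normal-flow interval $[t_k+\bar h,t_{k+1})$, where $q=1$. For $r=\sigma(t)$, define
\[
V(t)=e^{\mathsf T}(I_N\otimes P_r)e+\sum_i\int_{t-h(t)}^{t}e_i^{\mathsf T}(s)S_re_i(s)\dd s+\sum_i\int_{-\bar h}^{0}\int_{t+\theta}^{t}e_i^{\mathsf T}(s)R_re_i(s)\dd s\dd\theta .
\]
For comparison we also use the quadratic part $V_0(t)=e^{\mathsf T}(I_N\otimes P_r)e$. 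The bounds $\lambda_{\min}(P_r)\|e(t)\|^2\le V(t)$ and $V(t)\le \lambda_{\min}(P_r)\widetilde\chi_r e^{-(\beta_r+\gamma_r)\bar h}\sup_{[t-\bar h,t]}\|e\|^2$ follow directly. This is the role of the factor $1+(\bar h\lambda_{\max}(S_r)+\frac{\bar h^2}{2}\lambda_{\max}(R_r))/\lambda_{\min}(P_r)$ in $\widetilde\chi_r$.

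\textbf{Normal-flow interval.} When $q=1$, differentiate $V$ along \eqref{eq:error-system}. Bound the cross terms by $2e_i^{\mathsf T}P_rA_i\Delta f_i\le 2\|P_rA_i\|l_f\|e_i\|^2$ and, using Young's inequality with $\varepsilon_{i,r}$, $2e_i^{\mathsf T}P_rB_i\Delta g_i\le\varepsilon_{i,r}\|e_i\|^2+\varepsilon_{i,r}^{-1}\|P_rB_i\|^2l_g^2\|e_i(t-h)\|^2$. The $S$ term produces $-(1-\delta)e_i(t-h)^{\mathsf T}S_re_i(t-h)$, and the $R$ term produces $\bar h e_i^{\mathsf T}R_re_i-\int_{t-\bar h}^t e_i^{\mathsf T}R_re_i$, which we discard. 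Condition \eqref{eq:flow-lmi} then gives $\dot V\le\lambda_{1,r}V_0\le\bar\lambda_1 V\le\bar\lambda V$. At a switching instant inside this interval, \eqref{eq:mode-comparison} gives $V(t^+)\le\mu V(t^-)$. The average dwell-time bound \eqref{eq:adt} turns these jumps into the factor $\mu^{N_0}e^{\frac{\ln\mu}{T_a}(t-u)}$.

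\textbf{Recovery window.} By \eqref{eq:mode-hold}, no switch occurs and $q=0$, so only the undelayed part of the dynamics acts. Estimating $\frac{\dd}{\dd t}\|e\|_{P_r}^2$ from above and below gives
\[
V_0(t)\le e^{\beta_r(t-t_k)}V_0(t_k),\qquad \|e(t)\|^2\le e^{(\beta_r+\gamma_r)\bar h}\|e(t_k)\|^2\quad\text{on }[t_k,t_k+\bar h].
\]
At $t_k+\bar h$ the history interval $[t_k,t_k+\bar h]$ lies entirely after the impulse, since $\tau\ge\bar h$ and $h\le\bar h$. Hence
\[
V(t_k+\bar h)\le\lambda_{\min}(P_r)\widetilde\chi_{\max}e^{\beta_+\bar h}\|e(t_k)\|^2\cdot(\text{const}).
\]
The impulse \eqref{eq:impulse-map} together with \eqref{eq:impulse-contraction-condition} gives $\|e(t_k)\|^2\le\eta_k\|e(s_k^-)\|^2\le\eta_k V(s_k^-)/\lambda_{\min}(P)$. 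Over $[s_k,t_k)$, which is normal flow, $V(s_k^-)$ and $V(t_k^-)$ are related by $e^{\bar\lambda\tau}$. Combining these, the full cycle from $s_k^-$ to $t_k+\bar h$ multiplies $V$ by at most $e^{-\widetilde c_k}$, up to the $\mu$ factors. The definition \eqref{eq:ck} is designed to be exactly this multiplier.

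\textbf{Global chaining.} We chain the two estimates to bound $V(t)$ by
\[
\exp\Big(\bar\lambda(t-s_0)+\tfrac{\ln\mu}{T_a}(t-s_0)+\sum_k[-\widetilde c_k+(\bar\lambda+\rho)\tau]\Big)\,V(s_0)
\]
times constants. The extra $(\bar\lambda+\rho)\tau$ per impulse accounts for using $\bar\lambda$ as the growth rate over the delay $\tau$, which matches $W_k^2=e^{\bar\lambda\tau}$ in the trigger. The margin \eqref{eq:trigger-margin} guarantees $\Phi_k>\bar h$, so the recovery window ends before $s_{k+1}$ and the phases do not overlap. Condition \eqref{eq:cumulative-decay} then gives $V(t)\le\mathrm{const}\cdot e^{-\alpha(t-s_0)}V(s_0)$. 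Since $\bar\lambda\ge\bar\lambda_1$, we have $\alpha_{\rm eff}\ge\alpha$. The actual flow growth rate is $\bar\lambda_1$ rather than the $\bar\lambda$ used in \eqref{eq:cumulative-decay}, and this slack lets us improve the rate to $\alpha_{\rm eff}=\alpha+\bar\lambda-\bar\lambda_1$. Finally, bounding $V(s_0)$ by $\widetilde\chi_{\max}\lambda_{\min}(P)\|e_{s_0}\|_{\bar h}^2$ and taking square roots yields \eqref{eq:global-exp}, with $C_{\rm GE}$ collecting $\mu^{N_0}$, $e^{\varphi}$, and the ratios of eigenvalues.

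\textbf{Main obstacle.} The hard part is the bookkeeping across one period. The post-impulse functional at $t_k+\bar h$ must be bounded purely by $\|e(t_k)\|$, which requires checking that the delayed history there involves no pre-impulse states, while the window growth is simultaneously controlled by $\beta_+$. Matching these constants exactly to $\widetilde c_k$ and the $(\bar\lambda+\rho)\tau$ term, and placing the $\mu$-switch factors only in the normal-flow phases, is where the argument is most delicate. If exact matching fails, we would absorb the discrepancies into $\varphi$ and $C_{\rm GE}$.
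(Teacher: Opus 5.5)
Your architecture matches the paper's: the full Lyapunov--Krasovskii functional with $\lambda_{1,r}$-growth and $\mu$-jumps on normal phases, only the quadratic part on $[t_k,t_k+\bar h)$, the impulse acting on $e(s_k^-)$, an interface bound at $t_k+\bar h$, and condition (iii) to close the argument. The gap is the one-cycle estimate $V((t_k+\bar h)^+)\le e^{-\widetilde c_k}V(s_k^-)$, which you leave with a ``(const)''. It is not established, because the two inequalities you use to make the factors match are false for non-scalar $P_r$.

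First, $V(t)\le\lambda_{\min}(P_r)\widetilde\chi_re^{-(\beta_r+\gamma_r)\bar h}\sup_{[t-\bar h,t]}\|e\|^2$ needs $\lambda_{\max}(P_r)$ on the current quadratic term. The same slip in your bound on $V(s_0)$ only changes $C_{\rm GE}$.

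Second, $\|e(t)\|^2\le e^{(\beta_r+\gamma_r)\bar h}\|e(t_k)\|^2$ does not follow from $-\gamma_rV_0\le\dot V_0\le\beta_rV_0$. What follows is $\|e(t)\|^2\le\frac{\lambda_{\max}(P_r)}{\lambda_{\min}(P_r)}e^{\beta_r(t-t_k)}\|e(t_k)\|^2$, and the ratio cannot be dropped. $Q_{i,r}$ does not see a skew-symmetric part of $P_r(C_i+K_i)$. Such a part can carry $e_i$ along a level set of $e_i^{\mathsf T}P_re_i$ and change $\|e_i\|^2$ by up to that ratio in arbitrarily short time.

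Routing the impulse through $\|e(t_k)\|^2\le\eta_k\|e(s_k^-)\|^2$ therefore costs a factor $\kappa>1$, governed by $\lambda_{\max}(P_r)/\lambda_{\min}(P_r)$, in \emph{every} cycle. Your fallback of absorbing this into $\varphi$ and $C_{\rm GE}$ fails. Cycle lengths lie in $[\tau,\tau+\Phi_{\max}]$, so the number of cycles in $[s_0,t]$ grows linearly in $t-s_0$, and so do the accumulated factors in the exponent. This changes the rate, and condition (iii), which is calibrated to $\widetilde c_k$ exactly, then no longer implies decay in general.

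The repair is to stay in the $P_r$-weighted norm across the impulse. We have $(M_k\otimes I_n)^{\mathsf T}(I_N\otimes P_{r_k})(M_k\otimes I_n)=M_k^{\mathsf T}M_k\otimes P_{r_k}\preceq\eta_k(I_N\otimes P_{r_k})$, and the mode is $r_k$ on both sides by \eqref{eq:mode-hold}. Hence $V_0(t_k)\le\eta_kV_0(s_k^-)\le\eta_kV(s_k^-)$. Keep the quadratic term at $t_k+\bar h$ as $V_0(t_k+\bar h)\le e^{\beta_r\bar h}V_0(t_k)$. Pass to Euclidean norms only inside the $S_r$ and $R_r$ integrals, via $\|e(v)\|^2\le e^{\beta_r\bar h}V_0(t_k)/\lambda_{\min}(P_r)$. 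With $r=r_k$ this gives
\[
V\bigl((t_k+\bar h)^+\bigr)\le e^{\beta_r\bar h}\Bigl(1+\frac{\bar h\lambda_{\max}(S_r)+\frac{\bar h^2}{2}\lambda_{\max}(R_r)}{\lambda_{\min}(P_r)}\Bigr)\eta_kV(s_k^-)\le e^{-\widetilde c_k}V(s_k^-).
\]
The paper proceeds the same way, except that it bounds the history backward from $t_k+\bar h$ with the lower Gronwall estimate, which is why $\gamma_r$ appears in $\widetilde\chi_r$. Your forward version does not need $\gamma_r$.

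Two smaller corrections. Because the impulse acts on $e(s_k^-)$, the flow on $[s_k,t_k)$ never enters the recursion. The terms $(\bar\lambda+\rho)\tau$ in (iii) are therefore pure slack, which the paper discards, not compensation for growth over the delay. Also, inside a recovery window you should bound $\|e(t)\|$ through $V_0$, not the full $V$, whose integrals still contain uncontracted pre-impulse history.

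With these repairs, your continuous-time chaining is valid and gives the rate $\alpha_{\rm eff}$ at all times directly. It applies (iii) with $u=s_0$ and arbitrary $t$, using rate $\bar\lambda_1$ on normal phases and $e^{\beta_+\bar h}\eta_k\le e^{-\widetilde c_k}$ inside windows. It bypasses the paper's sampling-instant recursion, the bound $\Phi_k\le\Phi_{\max}$, and the interpolation constant $C_{\rm cyc}$.
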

\begin{proof}
The stated regularity and non-Zeno timing assumptions ensure that the closed-loop system admits a unique global right-continuous solution. It remains to establish the exponential estimate.

Fix matrices $P_r,S_r,R_r$ and constants $\lambda_{1,r},\varepsilon_{i,r}$ satisfying condition (i). In each recovery window, construct the quadratic Lyapunov function
\begin{equation}
	\mathcal V_r^0(t):=e^{\mathsf T}(t)(I_N\otimes P_r)e(t),
	\label{eq:V0}
\end{equation}

In each normal phase, construct the Lyapunov-Krasovskii functional
\begin{align}
	\mathcal V_r^1(t):={}&e^{\mathsf T}(t)(I_N\otimes P_r)e(t)+\int_{t-h(t)}^t e^{\mathsf T}(v)(I_N\otimes S_r)e(v)\,\dd v+\int_{-\bar h}^{0}\int_{t+\theta}^{t}
	e^{\mathsf T}(v)(I_N\otimes R_r)e(v)\,\dd v\,\dd\theta.
	\label{eq:Vfull}
\end{align}

In the following, write $\mathscr V(t)=\mathcal V_{\sigma(t)}^0(t)$ when $q(t)=0$, and write $\mathscr V(t)=\mathcal V_{\sigma(t)}^1(t)$ when $q(t)=1$. The initialization of $q$ gives $q=1$ at $s_0^-$ and throughout $[s_0,t_0)$. For $k\ge1$, the conditions $\tau\ge\bar h$ and \eqref{eq:trigger-margin} imply that $q=1$ at each sampling instant $s_k$ and each pre-impulse instant $t_k^-$. Thus, $\mathscr V$ is the full functional at these instants.

First, consider the continuous flow in a recovery window. When $q=0$ and the mode is fixed, \eqref{eq:error-system} has no delay term. Its error trajectory is absolutely continuous. Hence, $\mathcal V_r^0$ is also absolutely continuous. It follows from \eqref{eq:V0}, the definition of $Q_{i,r}$, and \eqref{eq:lipschitz} that, for almost every $t$,
\begin{align}
\dot{\mathcal V}_r^0(t)
={}&\sum_{i=1}^N\bigl[-e_i^{\mathsf T}Q_{i,r}e_i
 +2e_i^{\mathsf T}P_rA_i\Delta f_i\bigr]
\le{}\beta_r^{(0)}\sum_{i=1}^N\|e_i\|^2
\le\beta_r\mathcal V_r^0(t).
\label{eq:V0-upper}
\end{align}

Similarly,
\begin{align}
\dot{\mathcal V}_r^0(t)
&\ge-\gamma_r^{(0)}\sum_{i=1}^N\|e_i\|^2
\ge-\gamma_r\mathcal V_r^0(t).
\label{eq:V0-lower}
\end{align}

Therefore, in the almost-everywhere sense,
$-\gamma_r\mathcal V_r^0(t)\le\dot{\mathcal V}_r^0(t)
\le\beta_r\mathcal V_r^0(t)$. Apply the standard Gronwall inequality to these two differential inequalities. In a recovery window with a fixed mode, for any
$t_k\le u\le v\le t_k+\bar h$, this gives
\begin{equation}
  \mathcal V_r^0(v)\le e^{\beta_r(v-u)}\mathcal V_r^0(u),
  \qquad
  \mathcal V_r^0(u)\le e^{\gamma_r(v-u)}\mathcal V_r^0(v).
  \label{eq:two-sided-gronwall}
\end{equation}

Next, consider the continuous flow in a normal phase. Let $e_{i,h}:=e_i(t-h(t))$. When $q=1$, take the Dini derivative of the quadratic term in \eqref{eq:Vfull} along \eqref{eq:error-system}. We obtain
\begin{align*}
&D^+\!\left[e^{\mathsf T}(t)(I_N\otimes P_r)e(t)\right]
=\sum_{i=1}^N\bigl[
-e_i^{\mathsf T}Q_{i,r}e_i
+2e_i^{\mathsf T}P_rA_i\Delta f_i
+2e_i^{\mathsf T}P_rB_i\Delta g_i(t-h(t))
\bigr].
\end{align*}

The Lipschitz conditions and Young's inequality give, respectively,
\begin{align*}
&2e_i^{\mathsf T}P_rA_i\Delta f_i
\le 2\|P_rA_i\|l_f\|e_i\|^2,\\
&2e_i^{\mathsf T}P_rB_i\Delta g_i(t-h(t))
\le\varepsilon_{i,r}\|e_i\|^2
+\varepsilon_{i,r}^{-1}\|P_rB_i\|^2l_g^2\|e_{i,h}\|^2.
\end{align*}

Apply the Leibniz rule to the single-integral term. Using $\dot h(t)\le\delta$, we obtain
\begin{align*}
&D^+\!\left[
\int_{t-h(t)}^t
e^{\mathsf T}(v)(I_N\otimes S_r)e(v)\,\dd v
\right]\le
\lambda_{\max}(S_r)\sum_{i=1}^N\|e_i\|^2
-(1-\delta)\lambda_{\min}(S_r)
\sum_{i=1}^N\|e_{i,h}\|^2.
\end{align*}

The Dini derivative of the double-integral term satisfies
\begin{align*}
&D^+\!\left[
\int_{-\bar h}^{0}\int_{t+\theta}^{t}
e^{\mathsf T}(v)(I_N\otimes R_r)e(v)\,\dd v\,\dd\theta
\right]\\
&\quad=
\bar h\,e^{\mathsf T}(t)(I_N\otimes R_r)e(t)
-\int_{t-\bar h}^{t}
e^{\mathsf T}(v)(I_N\otimes R_r)e(v)\,\dd v\\
&\quad\le
\bar h\lambda_{\max}(R_r)\sum_{i=1}^N\|e_i\|^2.
\end{align*}

Combining the above estimates gives the following bound for the Dini derivative of the full functional:
\begin{equation}
\begin{aligned}
D^+\mathcal V_r^1(t)
\le\sum_{i=1}^N\bigl[
 e_i^{\mathsf T}\Lambda_{i,r}^{(11)}e_i
 +e_{i,h}^{\mathsf T}\Lambda_{i,r}^{(22)}e_{i,h}
\bigr].
\end{aligned}
\label{eq:Vfull-derivative}
\end{equation}

By \eqref{eq:flow-lmi},
\begin{equation}
  D^+\mathcal V_r^1(t)
  \le\lambda_{1,r}e^{\mathsf T}(t)(I_N\otimes P_r)e(t)
  \le\lambda_{1,r}\mathcal V_r^1(t).
  \label{eq:normal-growth}
\end{equation}

Suppose that a switch from mode $r$ to mode $j$ occurs during a normal phase. Applying \eqref{eq:mode-comparison} term by term to \eqref{eq:Vfull} gives
\begin{equation}
  \mathcal V_j^1(t^+)\le\mu\mathcal V_r^1(t^-).
  \label{eq:switch-jump}
\end{equation}

We next estimate the functional transition at the end of a recovery window. Let $t_k^\star:=t_k+\bar h$. Since $h(t_k^\star)\le\bar h$, all history involved in \eqref{eq:Vfull} at $t_k^\star$ lies in the recovery window $[t_k,t_k^\star]$. By \eqref{eq:two-sided-gronwall}, every $v$ in this interval satisfies
\begin{equation}
  \mathcal V_r^0(v)
  \le e^{(\beta_r+\gamma_r)\bar h}
  \mathcal V_r^0(t_k^\star-).
  \label{eq:history-compare}
\end{equation}

Moreover,
$\|e(v)\|^2\le\mathcal V_r^0(v)/\lambda_{\min}(P_r)$. Hence,
\begin{align*}
&\int_{t_k^\star-h(t_k^\star)}^{t_k^\star}
e^{\mathsf T}(v)(I_N\otimes S_r)e(v)\,\dd v\le
e^{(\beta_r+\gamma_r)\bar h}
\frac{\bar h\lambda_{\max}(S_r)}{\lambda_{\min}(P_r)}
\mathcal V_r^0(t_k^\star-),\\
&\int_{-\bar h}^{0}\int_{t_k^\star+\theta}^{t_k^\star}
e^{\mathsf T}(v)(I_N\otimes R_r)e(v)\,\dd v\,\dd\theta\le
e^{(\beta_r+\gamma_r)\bar h}
\frac{\bar h^2\lambda_{\max}(R_r)}{2\lambda_{\min}(P_r)}
\mathcal V_r^0(t_k^\star-).
\end{align*}

Therefore, the definition of $\widetilde\chi_r$ and \eqref{eq:mode-hold-right} yield
\begin{equation}
  \mathcal V_r^1((t_k^\star)^+)
  \le\widetilde\chi_r\mathcal V_r^0(t_k^\star-).
  \label{eq:phase-switch}
\end{equation}

By \eqref{eq:mode-hold} and \eqref{eq:mode-hold-right}, the mode
$r_k=\sigma(s_k^-)=\sigma(t_k^-)$ remains unchanged on $[s_k,t_k+\bar h]$. Using \eqref{eq:impulse-map}, \eqref{eq:impulse-contraction-condition}, and the dominance of the full functional over its current-state quadratic term, we have
\begin{align}
\mathcal V_{r_k}^0(t_k^+)
&=e^{\mathsf T}(s_k^-)(M_k^{\mathsf T}M_k\otimes P_{r_k})e(s_k^-)\le\eta_k e^{\mathsf T}(s_k^-)(I_N\otimes P_{r_k})e(s_k^-)
\le\eta_k\mathcal V_{r_k}^1(s_k^-).
\label{eq:impulse-contraction}
\end{align}

Successive application of \eqref{eq:V0-upper}, \eqref{eq:phase-switch}, and \eqref{eq:ck} gives
\begin{align}
\mathcal V_{r_k}^1((t_k+\bar h)^+)
&\le\widetilde\chi_{\max}e^{\beta_+\bar h}\eta_k
 \mathcal V_{r_k}^1(s_k^-)=e^{-\widetilde c_k}\mathcal V_{r_k}^1(s_k^-).
\label{eq:cycle-contraction}
\end{align}

By \eqref{eq:trigger-margin}, $\tau\ge\bar h$, and \eqref{eq:Phi},
\begin{equation}
  \Phi_k-\bar h
  =\frac{a_k-\rho\bar h+\bar\lambda(\tau-\bar h)}
  {\bar\lambda+\rho}>0.
  \label{eq:normal-length}
\end{equation}

Thus, $[t_k+\bar h,s_{k+1})$ is a nondegenerate normal phase. Let $N_k^{\rm sw}$ denote the number of switches in this interval. Combining \eqref{eq:normal-growth}, \eqref{eq:switch-jump}, and \eqref{eq:cycle-contraction}, we obtain
\begin{equation}
  \mathscr V(s_{k+1}^-)
  \le e^{\bar\lambda_1(\Phi_k-\bar h)-\widetilde c_k}
  \mu^{N_k^{\rm sw}}\mathscr V(s_k^-).
  \label{eq:sample-recursion}
\end{equation}

Condition \eqref{eq:mode-hold} excludes mode switches in all other intervals of the current cycle. Thus, $N_k^{\rm sw}$ is also the total number of switches in $[s_k,s_{k+1})$. Iterate \eqref{eq:sample-recursion} and use
$
  \sum_{i=0}^{k-1}\Phi_i=(s_k-s_0)-k\tau,
  \,
  \sum_{i=0}^{k-1}N_i^{\rm sw}
  \le\frac{s_k-s_0}{T_a}+N_0,
$
to obtain
\begin{align}
\ln\mathscr V(s_k^-)
\le{}&\ln\mathscr V(s_0^-)
 +\bar\lambda_1\bigl[(s_k-s_0)-k\tau-k\bar h\bigr]-\sum_{i=0}^{k-1}\widetilde c_i
 +\frac{\ln\mu}{T_a}(s_k-s_0)+N_0\ln\mu.
\label{eq:log-recursion}
\end{align}

Set $u=s_0$ and $t=s_k$ in \eqref{eq:cumulative-decay}. Then
\begin{equation}
 -\sum_{i=0}^{k-1}\widetilde c_i
 +\frac{\ln\mu}{T_a}(s_k-s_0)
 \le-(\alpha+\bar\lambda)(s_k-s_0)
 -k(\bar\lambda+\rho)\tau+\varphi.
 \label{eq:H5-sample}
\end{equation}

Substitute \eqref{eq:H5-sample} into \eqref{eq:log-recursion} and discard the nonpositive terms involving $k$. Let
$\varphi_{\rm eff}:=\varphi+N_0\ln\mu$. We obtain
\begin{equation}
  \mathscr V(s_k^-)
  \le e^{\varphi_{\rm eff}}
  \mathscr V(s_0^-)
  e^{-\alpha_{\rm eff}(s_k-s_0)}.
  \label{eq:sample-decay}
\end{equation}

We next estimate a uniform upper bound for the triggering intervals. Fix $k$ and take
$0<\varepsilon<\Phi_k$. Set $u=t_k+\varepsilon$ and $t=s_{k+1}$ in \eqref{eq:cumulative-decay}.
The interval $[t_k+\varepsilon,s_{k+1})$ contains no impulse and has length
$\Phi_k-\varepsilon$. Hence,
\[
  \left(\alpha+\bar\lambda+\frac{\ln\mu}{T_a}\right)
  (\Phi_k-\varepsilon)\le\varphi.
\]

Letting $\varepsilon\downarrow0$ gives
\begin{equation}
  0<\Phi_k\le\Phi_{\max}:=
  \frac{\varphi}{\alpha+\bar\lambda+\frac{\ln\mu}{T_a}}<\infty.
  \label{eq:Phi-max}
\end{equation}

Thus, $\Phi_k$ has the uniform upper bound required for the estimate over the entire time domain.

We now extend the estimate at the sampling instants to all times. If $t\in[s_k,t_k)$, then \eqref{eq:normal-growth} and \eqref{eq:mode-hold} give
\[
  \mathscr V(t)\le e^{\bar\lambda\tau}\mathscr V(s_k^-).
\]

If $t\in[t_k,t_k+\bar h)$, then \eqref{eq:V0-upper} and \eqref{eq:impulse-contraction} give
\[
  \mathscr V(t)\le e^{\beta_+\bar h}\mathscr V(s_k^-)
  \le e^{\bar\lambda\bar h}\mathscr V(s_k^-).
\]

If $t\in[t_k+\bar h,s_{k+1})$, then \eqref{eq:phase-switch}, \eqref{eq:normal-growth}, \eqref{eq:switch-jump}, \eqref{eq:adt}, and \eqref{eq:Phi-max} give
\[
\mathscr V(t)
\le\widetilde\chi_{\max}
e^{\beta_+\bar h+\bar\lambda_1\Phi_{\max}}
\mu^{N_0+\Phi_{\max}/T_a}\mathscr V(s_k^-).
\]

Therefore, for every $t\in[s_k,s_{k+1})$,
\begin{equation}
  \mathscr V(t)\le C_{\rm cyc}\mathscr V(s_k^-),
  \label{eq:cycle-bound}
\end{equation}
where
\begin{equation}
\begin{aligned}
C_{\rm cyc}:=\max\bigl\{&e^{\bar\lambda\tau},
 e^{\bar\lambda\bar h},\widetilde\chi_{\max}
e^{\beta_+\bar h+\bar\lambda_1\Phi_{\max}}
\mu^{N_0+\Phi_{\max}/T_a}\bigr\}.
\end{aligned}
\label{eq:Ccyc}
\end{equation}

Moreover, $t-s_k<\tau+\Phi_{\max}$. It follows from \eqref{eq:sample-decay} and \eqref{eq:cycle-bound} that
\begin{equation}
\mathscr V(t)
\le C_{\rm cyc}
e^{\varphi_{\rm eff}+\alpha_{\rm eff}(\tau+\Phi_{\max})}
\mathscr V(s_0^-)
e^{-\alpha_{\rm eff}(t-s_0)}.
\label{eq:all-time-V}
\end{equation}

Define
\begin{align*}
\underline p&:=\min_{r\in\mathcal S}\lambda_{\min}(P_r),\\
C_h&:=\max_{r\in\mathcal S}\lambda_{\max}(P_r)
+\bar h\max_{r\in\mathcal S}\lambda_{\max}(S_r)
+\frac{\bar h^2}{2}\max_{r\in\mathcal S}\lambda_{\max}(R_r).
\end{align*}

By \eqref{eq:V0} and \eqref{eq:Vfull},
\begin{equation}
  \mathscr V(t)\ge\underline p\|e(t)\|^2,
  \qquad
  \mathscr V(s_0^-)\le C_h\|e_{s_0}\|_{\bar h}^2.
  \label{eq:norm-equivalence}
\end{equation}

Substituting \eqref{eq:norm-equivalence} into \eqref{eq:all-time-V} gives \eqref{eq:global-exp}, where we can take
\begin{equation}
  C_{\rm GE}:=
  \left(\frac{C_{\rm cyc}C_h}{\underline p}\right)^{1/2}
  \exp\!\left[
  \frac{\varphi_{\rm eff}+\alpha_{\rm eff}
  (\tau+\Phi_{\max})}{2}\right].
  \label{eq:CGE}
\end{equation}

Since $e_i(t)=x_i(t)-s(t)$, \eqref{eq:global-exp} shows that every follower error $e_i(t)$ converges to zero at an exponential rate of at least $\alpha_{\rm eff}/2$. Therefore, the closed-loop system achieves global exponential leader-follower synchronization.
\end{proof}

\begin{remark}
When $\gamma_r^{(0)}\le0$ holds for every mode, $\gamma_r=0$. The phase-transition factor $\widetilde\chi_r$ then automatically reduces to its original form without the bidirectional growth compensation. This reduction leaves all other conditions of the theorem and the proof structure unchanged.
\end{remark}

\subsection{Numerical Verification}\label{sec:numerical-verification}

\begin{example}\label{ex:numerical-simulation}

Consider one three-dimensional leader and five three-dimensional heterogeneous
followers. The nonlinear functions are applied componentwise and are chosen as
$
f(z)=g(z)=\tanh(z),\, l_f=l_g=1.
$
The matrices of the leader system are
$
C=\diag(0.45,0.50,0.55),\,
B=\diag(0.12,0.10,0.09),
$
and
\[
A=\begin{bmatrix}
1.05&-0.25&0.15\\
0.20&0.95&-0.20\\
-0.15&0.25&1.00
\end{bmatrix}.
\]

The follower matrices are diagonal and are chosen as
\begin{align*}
C_1&=\diag(0.30,0.36,0.42),&
A_1&=\diag(1.20,0.95,1.05),&
B_1&=\diag(0.15,0.10,0.12),\\
C_2&=\diag(0.34,0.31,0.46),&
A_2&=\diag(1.10,1.18,0.92),&
B_2&=\diag(0.11,0.14,0.09),\\
C_3&=\diag(0.38,0.44,0.32),&
A_3&=\diag(0.98,1.08,1.15),&
B_3&=\diag(0.13,0.08,0.15),\\
C_4&=\diag(0.33,0.48,0.37),&
A_4&=\diag(1.16,1.02,1.10),&
B_4&=\diag(0.10,0.12,0.14),\\
C_5&=\diag(0.41,0.35,0.50),&
A_5&=\diag(1.05,1.12,1.19),&
B_5&=\diag(0.14,0.15,0.11).
\end{align*}

The continuous feedback gains are $K_i=1.20I_3$ for $i=1,\ldots,5$.

The time-varying delay is
$
h(t)=0.075+0.025\sin(0.8t),
$
so $0.05\le h(t)\le\bar h=0.10$ and
$\max_t\dot h(t)=0.02<\delta=0.05<1$.
The two switching topologies are a five-node ring and a star centered at
follower 3. Their Laplacian matrices are
\[
L_1=\begin{bmatrix}
2&-1&0&0&-1\\
-1&2&-1&0&0\\
0&-1&2&-1&0\\
0&0&-1&2&-1\\
-1&0&0&-1&2
\end{bmatrix},
\qquad
L_2=\begin{bmatrix}
1&0&-1&0&0\\
0&1&-1&0&0\\
-1&-1&4&-1&-1\\
0&0&-1&1&0\\
0&0&-1&0&1
\end{bmatrix}.
\]

Only follower 1 has a direct pinning channel, and hence
$D=\diag(1,0,0,0,0)$.

For the impulsive controller, choose
$
\mu_k=0.05,\eta_k=0.07, \tau=1.00,
 \bar\lambda=0,
 \rho=0.10,
 a_k=0.10$.

The switch in each cycle is scheduled at $s_k+1.50$ s, after the protected
interval ends at $s_k+\tau+\bar h=s_k+1.10$ s. The two impulsive topology modes
alternate over successive triggering cycles. The switching parameters are
$T_a=2.00$ and $N_0=1$. The Lyapunov parameters are selected as
$
P_r=I_3, S_r=0.25I_3, R_r=0.05I_3,
\varepsilon_{i,r}=0.10,
$
for both modes, with
$
\lambda_{1,r}=0, \mu=1,
\alpha=0.80, \varphi=1.97.
$

The initial histories are constant on $[-0.10,0]$. They are specified by
\[
s(\theta)=\begin{bmatrix}0.60&-0.40&0.80\end{bmatrix}^{\mathsf T},
\qquad
\begin{bmatrix}
e_1^{\mathsf T}(\theta)\\
e_2^{\mathsf T}(\theta)\\
e_3^{\mathsf T}(\theta)\\
e_4^{\mathsf T}(\theta)\\
e_5^{\mathsf T}(\theta)
\end{bmatrix}
=
\begin{bmatrix}
5&1&-3\\
4&2&-2\\
-5&-2&0\\
1&0&-1\\
-7&-3&1
\end{bmatrix},
\quad -0.10\le\theta\le0.
\]

These values satisfy all conditions of
Theorem~\ref{thm:main}. Therefore, the network achieves global exponential
leader-follower synchronization under the proposed hybrid control. Figure~\ref{fig:controlled-errors} shows that all synchronization errors
converge to zero under the hybrid control. The maximum error at $t=12$ s is
$4.3163\times10^{-7}$. For comparison, all control and inter-node coupling are
removed in Figure~\ref{fig:uncontrolled-errors}. The error curves remain
separated, and the maximum error at $t=12$ s is approximately $8.4527$.

\begin{figure}[pos=!htbp]
\centering
\includegraphics[width=0.92\linewidth]{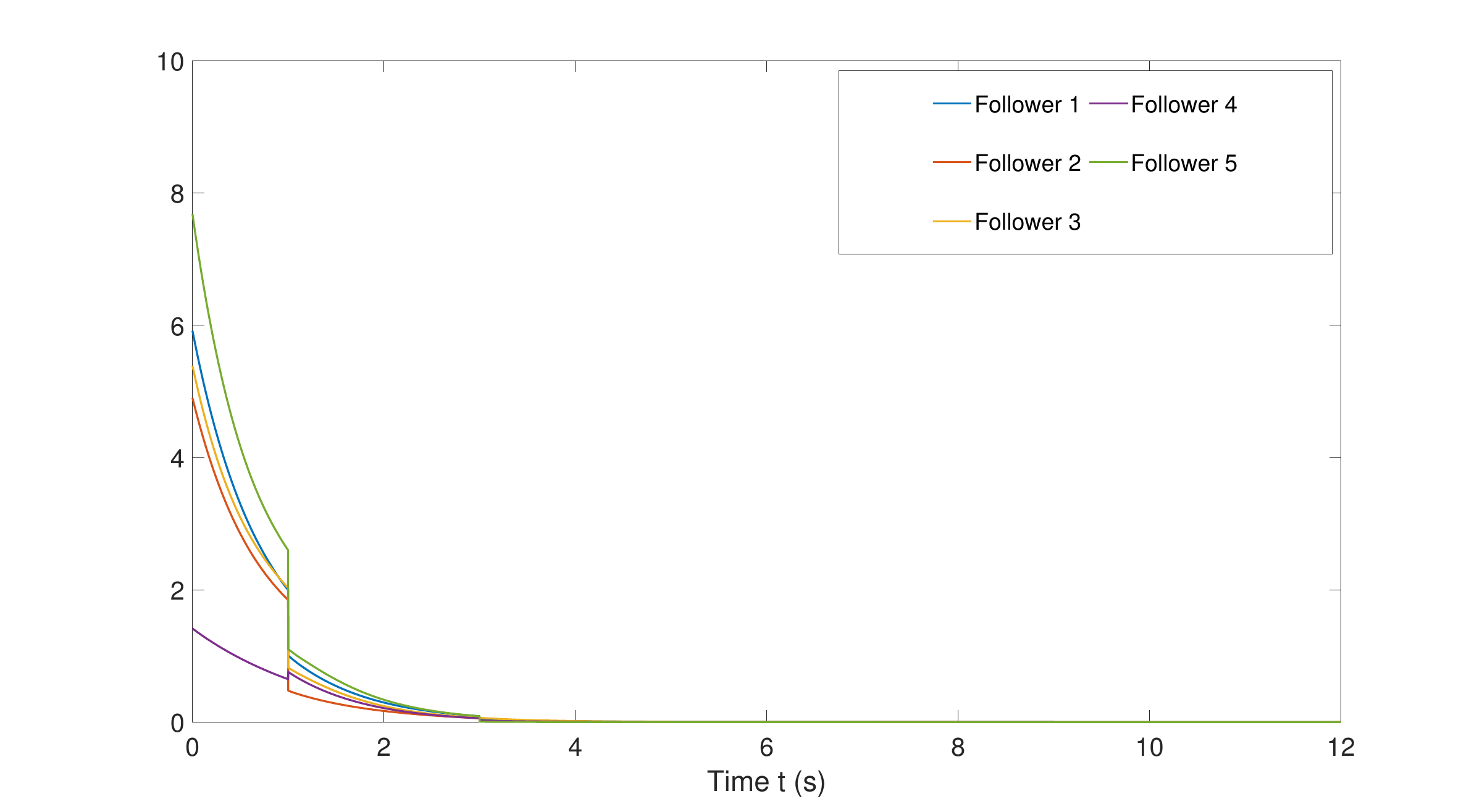}
\caption{Synchronization errors $\|x_i(t)-s(t)\|_2$ of the five followers under the proposed hybrid control.}
\label{fig:controlled-errors}
\end{figure}

\begin{figure}[pos=!htbp]
\centering
\includegraphics[width=0.92\linewidth]{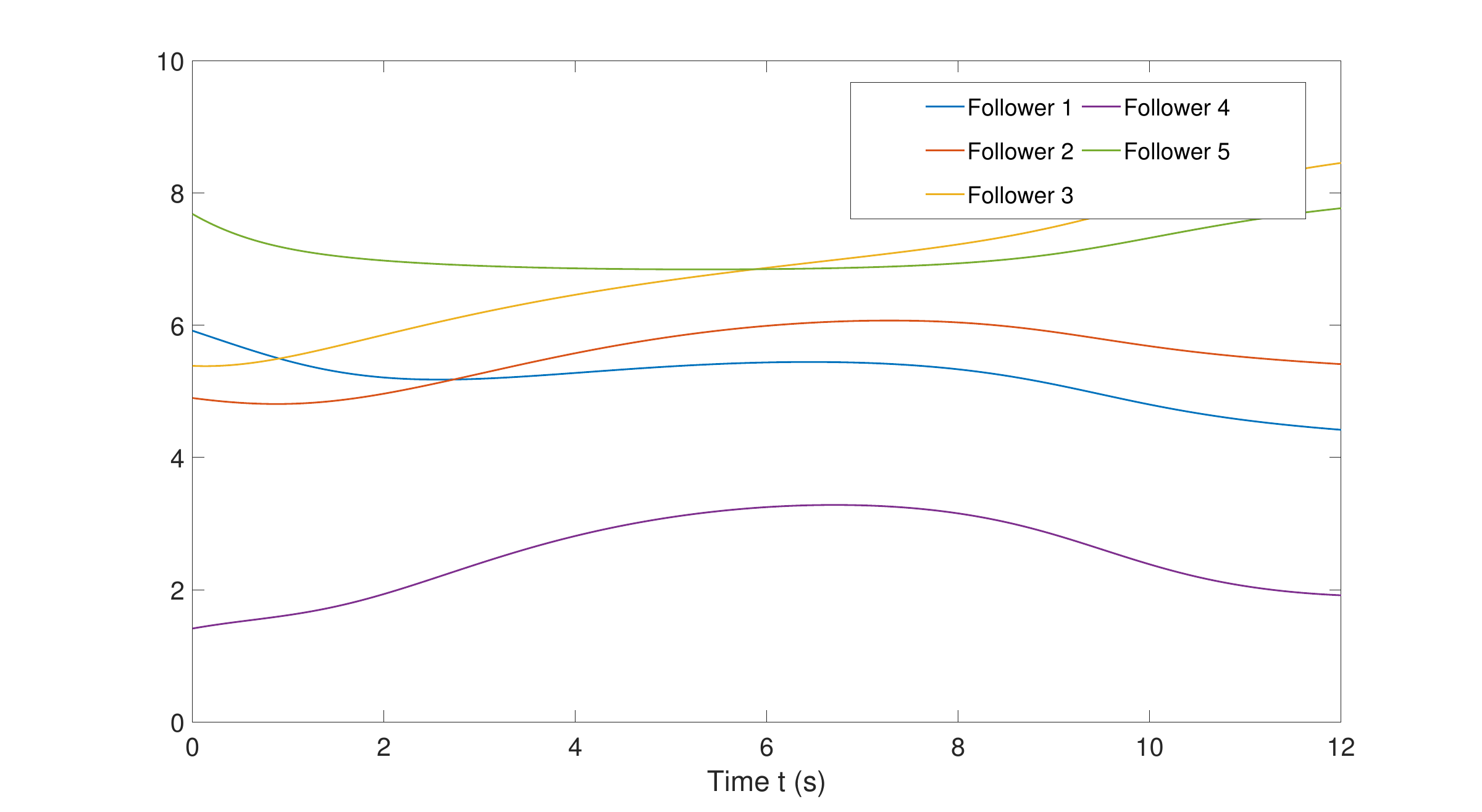}
\caption{Synchronization errors $\|x_i(t)-s(t)\|_2$ of the five followers without control.}
\label{fig:uncontrolled-errors}
\end{figure}
\end{example}

\section{Conclusion}

Andy provides an end-to-end workflow that turns a mathematical problem into a traceable proof. It organizes proof steps in an executable DAG, binds each local verification result to a certificate, uses node interfaces to confine repairs to the affected proof region, and records the complete path from problem formulation to final proof. The system separates proof generation from correctness evaluation and can acquire, retain, retrieve, and reuse knowledge from existing results. The case study shows that Andy can develop a technically meaningful problem from an established result and carry it through theorem construction, proof, and numerical illustration.

Starting from the self-triggered impulsive consensus result in \cite{hong2022consensus}, Andy formulated a global exponential leader-follower synchronization problem for delayed heterogeneous networks with switching communication topologies. The control-theoretic contribution is the recovery-window design. After each delayed impulse, model-matching feedback temporarily removes the delayed error channel until the pre-impulse history leaves the active delay interval. This separation permits a quadratic Lyapunov estimate during recovery and restores the full Lyapunov-Krasovskii functional afterward. It brings switching topology, impulse execution delay, and self-triggered updates into one exponential-synchronization proof. Sufficient conditions were established, Zeno behavior was excluded for both timing sequences, and the numerical example illustrates the expected synchronization behavior.

Future work will investigate CrewAI-style role-based multi-agent orchestration in which the solver, evaluator, and human-like monitor operate as distinct agents. Independent roles can reduce correlated self-evaluation errors and make disagreements explicit. Parallel exploration can increase proof-route coverage and shorten the time spent on unsuccessful branches. Role-specific messages and certificates can also improve traceability, while dynamic routing, escalation, and cross-checking can make the workflow more robust to errors from a single model or reasoning route.

\FloatBarrier

\bibliographystyle{elsarticle-num}
\bibliography{andy_refs}

\appendix
\section{Research Trace of the Case Study}
\label{app:research-trace}

This appendix records the main research decisions made during the case study. It summarizes the problem-revision history, the low-cost exploration of proof routes, the construction of the proof dependency graph, and the selection of the final proof chain. 

\subsection{Problem Revision and Research-Value Evaluation}

The evaluator recorded nine consecutive versions of the proposed research problem, indexed from 0 to 8. Thus, the initial proposal was followed by eight revision cycles. Each version was scored for importance, originality, feasibility, and coherence. Acceptance required a weighted total of at least $7.5$, a score of at least $6$ in every dimension, and a single research objective that could be answered by one principal conclusion. Table~\ref{tab:problem-revisions} gives the complete evaluation trace.

\begin{table}[pos=!htbp]
\caption{Evaluation trace for the proposed research problem.}
\label{tab:problem-revisions}
\centering
\small
\setlength{\tabcolsep}{7pt}
\begin{tabular}{@{}c c c c c c@{}}
\toprule
Version & Imp. & Orig. & Feas. & Coh. & Total \\
\midrule
0 & 6.5 & 6 & 6 & 5.5 & 6.07 \\
1 & 6 & 5 & 3 & 4 & 4.65 \\
2 & 7 & 7 & 6 & 7 & 6.75 \\
3 & 7 & 8 & 6 & 7 & 7.05 \\
4 & 8 & 8 & 7 & 8 & 7.75 \\
5 & 8 & 8 & 7 & 8 & 7.75 \\
6 & 8 & 8 & 7 & 8 & 7.75 \\
7 & 8 & 8 & 7 & 8 & 7.75 \\
8 & 8 & 8 & 7 & 8 & 7.75 \\
\bottomrule
\end{tabular}
\end{table}

\begin{example}

In Version 1, Andy proposed the following problem. Consider a leader and $N$ heterogeneous followers governed by
\begin{align*}
\dot s(t)&=-Cs(t)+Af(s(t))+Bg(s(t-h(t))),\\
\dot x_i(t)&=-C_ix_i(t)+A_if(x_i(t))+B_ig(x_i(t-h(t)))+u_i^c(t)+u_i^p(t),
\end{align*}
with $e_i=x_i-s$, use the manifold-compensation feedback $u_i^c=(C_i-C)s+(A-A_i)f(s)+(B-B_i)g(s(t-h(t)))-K_ie_i$. At each delayed impulse time $t_k=s_k+\tau$, impose $e(t_k^+)=(M_k\otimes I_n)e(s_k^-)$. For the switching quadratic function $V(t)=e^{\mathsf T}(t)(I_N\otimes P_{\sigma(t)})e(t)$, introduce the scalar comparison system $\dot W=\bar\lambda_1W+\kappa$ between impulses and $W(t_k)=\eta_kW(s_k^-)$ at impulses. Generate the next sampling time from
\[
s_{k+1}=\inf\left\{t>s_k+\tau:t-s_k-\tau-\Phi(k,W_k^2)\ge0\right\},
\qquad
\Phi(k,W_k^2)=\frac{a_k+\ln W_k^2}{\bar\lambda_1+\rho}.
\]
The task was to select the controller and triggering parameters so that the switching, impulse-contraction, and cumulative-decay conditions guaranteed global exponential leader-follower synchronization and a uniform positive lower bound on the intersampling intervals.

This version received scores of $6$, $5$, $3$, and $4$ for importance, originality, feasibility, and coherence, respectively, with a total score of $4.65$. Feasibility was the principal failure. The proposed argument claimed that a fixed constant $\kappa$ could absorb the delayed state-dependent term $\lambda_2W(t-h(t))$. Such a constant cannot dominate that term while the same comparison system is required to converge to zero. The Halanay-Li comparison bridge therefore did not close. The impulsive condition also constrained only the instantaneous quadratic term and did not control the history terms of a complete Lyapunov-Krasovskii functional. Coherence fell below the threshold because the delay, switching, and impulsive mechanisms were not connected by one valid estimate, and the required condition $\dot h(t)\le\delta<1$ had not been stated. The originality score was also below the threshold because the claimed integration of the mechanisms depended on this invalid step and two literature attributions were incomplete.

The evaluator recommended removing the additive-constant absorption step, constructing an explicit mode-dependent Lyapunov-Krasovskii functional, adding the delay-derivative assumption, deriving verifiable block matrix inequalities, and treating the jump of the complete functional at impulsive instants. It also requested a concrete intermediate lemma that connected the flow, delay, and impulse estimates before the revised problem proceeded to full proof execution.
\end{example}

Versions 4 through 7 had already cleared every numerical threshold. They were rejected solely because the proposal still presented several parallel subquestions as separate targets. The final revision retained the mathematical content and reorganized these requests into one objective: establish global exponential leader-follower synchronization under a unified switching hybrid-control theorem.

\subsection{Low-Cost Route Exploration and DAG Construction}

The route planner first explored several strategies at low computational cost. The first round considered scalar comparison domination, direct interval-wise Lyapunov stitching, and a discrete impulse-map approach. The second round refined the comparison route into a Li-type scalar comparison argument with a cumulative condition and also considered multiple-Lyapunov stitching, unified growth-rate absorption, and a direct impulsive Gronwall-Halanay recurrence. These two rounds produced seven route records. The comparison route received the highest feasibility assessment and became the main route. The strongest structurally different candidates were retained as backups. The discrete-map and unified-growth approaches were discarded because they either lost essential delay and switching information or introduced excessive conservatism. The seven records therefore represent broad preliminary exploration, while only three route families were promoted to executable branches.

\subsection{Proof-Node Responsibilities and Exact Dependencies}

Here and below, $N_i$ denotes the $i$th proof node in the execution plan. Each proof node is a mathematical work unit with a specified input, conclusion, dependency set, and verification criterion. The subscript $i$ is a stable node identifier; model iterations, difficulty levels, and branch numbers use different notation. The notation $N_i\text{-}r_j$ denotes the immutable $j$th revision of node $N_i$. Table~\ref{tab:proof-nodes} defines every node.

\begin{table}[pos=!htbp]
\caption{Proof-node responsibilities, direct dependencies, and final records.}
\label{tab:proof-nodes}
\centering
\footnotesize
\setlength{\tabcolsep}{3pt}
\renewcommand{\arraystretch}{1.10}
\begin{tabular}{@{}c p{0.45\linewidth} p{0.17\linewidth} p{0.17\linewidth}@{}}
\toprule
Node & Responsibility & Direct predecessors & Record \\
\midrule
$N_1$ & Two-stage error system, recovery-window controller, and impulsive error map & None & Frozen at $r_2$ \\
$N_2$ & Recovery-window quadratic growth bound & $N_1$ & Frozen at $r_1$ \\
$N_3$ & Normal-phase LKF Dini-derivative estimate & $N_1$ & Frozen at $r_1$ \\
$N_4$ & Impulse jump estimate and effective contraction factor & $N_1$ & Frozen at $r_1$ \\
$N_5$ & Interface between the recovery and normal phases & $N_1,N_2$ & Frozen at $r_5$ \\
$N_6$ & Li-type scalar comparison dominating the full functional & $N_2,N_3,N_4,N_5$ & Invalidated \\
$N_7$ & Cauchy factor and exponential-decay estimate & $N_6$ & Invalidated \\
$N_8$ & Main-branch theorem assembly & $N_7$ & Invalidated \\
$N_9$ & Direct interval-wise stitching without a scalar comparator & $N_2,N_3,N_4,N_5$ & Failed after $r_3$ \\
$N_{10}$ & First backup-theorem assembly & $N_9$ & Not run; invalidated \\
$N_{11}$ & Direct impulsive Gronwall-Halanay cumulative recurrence & $N_2,N_3,N_4,N_5$ & Frozen at $r_2$ \\
$N_{12}$ & Final synchronization-theorem assembly & $N_{11}$ & Frozen at $r_4$ \\
\bottomrule
\end{tabular}
\end{table}

\FloatBarrier

The common prefix of the graph contained the system reduction and the four estimates needed by every branch. The planner then attached three alternative tails to this verified prefix. Figure~\ref{fig:case-study-dag} shows the resulting execution graph.

\begin{figure}[pos=!htbp]
\centering
\begin{tikzpicture}[
proofnode/.style={draw, rounded corners=1.3pt, align=center, minimum width=11mm,
  minimum height=7mm, font=\small, inner sep=1.4pt},
frozen/.style={proofnode, fill=green!12, draw=green!50!black},
failed/.style={proofnode, fill=red!10, draw=red!65!black},
invalid/.style={proofnode, fill=gray!15, draw=gray!65},
edge/.style={-{Stealth[length=1.7mm]}, semithick},
shared/.style={semithick},
junction/.style={circle, fill=black, inner sep=1.2pt}
]
\node[frozen] (n1) at (0,0) {$N_1$};
\node[frozen] (n2) at (2.2,1.45) {$N_2$};
\node[frozen] (n3) at (2.2,0) {$N_3$};
\node[frozen] (n4) at (2.2,-1.45) {$N_4$};
\node[frozen] (n5) at (4.4,1.45) {$N_5$};
\coordinate (bustop) at (5.4,1.45);
\node[junction] (join) at (5.4,0) {};
\coordinate (busbottom) at (5.4,-1.45);

\node[invalid] (n6) at (7,1.45) {$N_6$};
\node[invalid] (n7) at (9,1.45) {$N_7$};
\node[invalid] (n8) at (11,1.45) {$N_8$};
\node[failed] (n9) at (7,0) {$N_9$};
\node[invalid] (n10) at (9,0) {$N_{10}$};
\node[frozen] (n11) at (7,-1.45) {$N_{11}$};
\node[frozen] (n12) at (9,-1.45) {$N_{12}$};

\draw[edge] (n1) -- (n2);
\draw[edge] (n1) -- (n3);
\draw[edge] (n1) -- (n4);
\draw[edge] (n1.north) -- (0,2.25) -- (4.4,2.25) -- (n5.north);
\draw[edge] (n2) -- (n5);
\draw[shared] (bustop) -- (busbottom);
\draw[shared] (n5.east) -- (bustop);
\draw[shared] (n2.south) -- (2.2,0.75) -- (5.4,0.75);
\draw[shared] (n3.east) -- (join);
\draw[shared] (n4.east) -- (busbottom);
\draw[edge] (join) -- (n6);
\draw[edge] (join) -- (n9);
\draw[edge] (join) -- (n11);
\draw[edge] (n6) -- (n7);
\draw[edge] (n7) -- (n8);
\draw[edge] (n9) -- (n10);
\draw[edge] (n11) -- (n12);
\end{tikzpicture}
\caption{Executed proof DAG. Node responsibilities and direct predecessors are listed in Table~\ref{tab:proof-nodes}. Green nodes were verified and frozen, red marks the failed backup node, and gray nodes were invalidated after an upstream change or branch failure.}
\label{fig:case-study-dag}
\end{figure}

\subsection{Failed Branches and Final Route Selection}

The main branch comprised $N_6$, $N_7$, and $N_8$. During the repair of $N_5$, the interface factor was replaced by the corrected mode-dependent quantity $\widetilde\chi_r$. This change altered the dependency fingerprint of the downstream comparison construction. The three nodes were therefore invalidated before further execution. Their invalidation records an upstream interface change and leaves the local validity of individual statements in the abandoned branch undecided.

The first backup branch failed at $N_9$ after three revisions. The first revision contained a sign error in the logarithmic exponent and counted the average-dwell-time chatter term twice. The second introduced an undeclared maximum interval and omitted a time-dependent switching factor. The third relied on an unavailable growth-rate inequality, counted the recovery cost twice, and omitted switches between a sampling instant and its delayed impulse. These failures showed that direct interval stitching was too fragile for the coupled recovery, delay, and switching interfaces. Consequently, $N_{10}$ was never executed.

The second backup branch retained the interval estimates and changed only their assembly mechanism. The first revision of $N_{11}$ omitted the corrected phase condition and did not verify that the normal interval had nonnegative length. Its second revision used $\widetilde\chi_{\max}$, added the required trigger margin, and completed the cumulative recurrence. The first three revisions of $N_{12}$ still exposed mismatches between the theorem statement and the verified proof interface. Revision $4$ made the actual assumptions explicit and passed independent verification.

The final proof chain was therefore
\[
N_1\text{-}r_2
\longrightarrow
\{N_2\text{-}r_1,N_3\text{-}r_1,N_4\text{-}r_1\}
\longrightarrow
N_5\text{-}r_5
\longrightarrow
N_{11}\text{-}r_2
\longrightarrow
N_{12}\text{-}r_4.
\]
The braces indicate that $N_2$, $N_3$, and $N_4$ were verified in parallel. The dependency of $N_5$ was limited to $N_1$ and $N_2$, while $N_{11}$ used all four shared estimates $N_2$ through $N_5$. At the end of the run, seven nodes were frozen, one node had failed, and four nodes had been invalidated. The graph contained nineteen immutable node revisions, which preserved the complete repair history while allowing the verified prefix to be reused.

\FloatBarrier

\end{document}